\def\I{{\bf I}}
\def\x{{\bf x}}
\def\y{{\bf y}}
\def\z{{\bf z}}
\def\w{{\bf w}}
\def\0{{\bf 0}}
\def\1{{\bf 1}}
\def\AM{{\mathcal A}}
\def\FM{{\mathcal F}}
\def\NM{{\mathcal N}}
\def\IM{{\mathcal I}}
\def\GM{{\mathcal G}}
\def\PM{{\mathcal P}}
\def\KM{{\mathcal K}}
\def\LM{{\mathcal L}}
\def\DM{{\mathcal D}}
\def\SM{{\mathcal S}}
\def\EB{{\mathbb E}}
\def\VB{{\mathbb V}}
\def\muu{\mbox{\boldmath$\mu$\unboldmath}}
\def\Si{\mbox{\boldmath$\Sigma$\unboldmath}}
\newtheorem{theorem}{Theorem}
\newtheorem{lemma}{Lemma}
\newtheorem{definition}{Definition}
\numberwithin{theorem}{section}
\numberwithin{lemma}{section}
\numberwithin{remark}{section}
\numberwithin{cor}{section}
\numberwithin{proposition}{section}
\numberwithin{definition}{section}
\newcommand{\tabref}[1]{Table~\ref{#1}}
\newcommand{\secref}[1]{Sec.~\ref{#1}}
\newcommand{\defref}[1]{Definition~\ref{#1}}
\newcommand{\eqnref}[1]{Eqn.~\ref{#1}}
\renewcommand{\hat}{\widehat}
\renewcommand{\frac}{\tfrac}
\title{Variational Imbalanced Regression: Fair Uncertainty Quantification via Probabilistic Smoothing}
\author{%
  Ziyan Wang \\
  Georgia Institute of Technology \\
  \texttt{wzy@gatech.edu}
  \And
  Hao Wang \\
  Rutgers University \\
  \texttt{hw488@cs.rutgers.edu}\\
}
\begin{document}

\maketitle

\begin{abstract}
Existing regression models tend to fall short in both accuracy and uncertainty estimation when the label distribution is imbalanced. In this paper, we 
propose a probabilistic deep learning model, dubbed variational imbalanced regression (VIR), which not only performs well in imbalanced regression but naturally produces reasonable uncertainty estimation as a byproduct. 
Different from typical variational autoencoders assuming I.I.D. representations (a data point's representation is not directly affected by other data points), our VIR borrows data with similar regression labels to compute the latent representation's variational distribution; furthermore, different from deterministic regression models producing point estimates, VIR predicts the entire normal-inverse-gamma distributions and modulates the associated conjugate distributions to impose probabilistic reweighting on the imbalanced data, thereby providing better uncertainty estimation. Experiments in several real-world datasets show that our VIR can outperform state-of-the-art imbalanced regression models in terms of both accuracy and uncertainty estimation. Code will soon be available at \url{https://github.com/Wang-ML-Lab/variational-imbalanced-regression}.
\end{abstract}

\section{Introduction}\label{sec:intro}
Deep regression models are currently the state of the art in making predictions in a continuous label space and have a wide range of successful applications in computer vision~\citep{ExampleCV}, natural language processing~\citep{ExampleNLP}, healthcare~\citep{BIN,CounTS}, recommender systems~\citep{ExposureBias,REN}, etc. 
However, these models fail however when the label distribution in training data is imbalanced. For example, in visual age estimation~\citep{AGEDB}, where a model infers the age of a person given her visual appearance, models are typically trained on imbalanced datasets with overwhelmingly more images of younger adults, leading to poor regression accuracy for images of children or elderly people~\citep{MDLT,DIR}. 
Such unreliability in imbalanced regression settings motivates the need for both \emph{improving performance for the minority} in the presence of imbalanced data and, more importantly, \emph{providing reasonable uncertainty estimation} to inform practitioners on how reliable the predictions are (especially for the minority where accuracy is lower).

Existing methods for deep imbalanced regression (DIR) only focus on improving the accuracy of deep regression models by smoothing the label distribution and reweighting data with different labels~\citep{MDLT,DIR}. On the other hand, methods that provide uncertainty estimation for deep regression models operates under the balance-data assumption and therefore do not work well in the {imbalanced setting~\citep{DER,natPN,TFuncertainty}.}

{
To simultaneously cover these two desiderata, we propose a probabilistic deep imbalanced regression model, dubbed variational imbalanced regression (VIR). 
% VIR is particularly useful for minority data as it can borrow representations from data with similar labels (and naturally weigh them using our probabilistic model) to counteract data sparsity. 
Different from typical variational autoencoders assuming I.I.D. representations (a data point's representation is not directly affected by other data points), {our VIR assumes Neighboring and Identically Distributed (N.I.D.) and borrows data with similar regression labels to compute the latent representation's variational distribution.} Specifically, VIR first encodes a data point into a probabilistic representation and then mix it with neighboring representations (i.e., representations from data with similar regression labels) to produce its final probabilistic representation; 
VIR is therefore particularly useful for minority data as it can borrow probabilistic representations from data with similar labels (and naturally weigh them using our probabilistic model) to counteract data sparsity. 
Furthermore, different from deterministic regression models producing point estimates, VIR predicts the entire Normal Inverse Gamma (NIG) distributions and modulates the associated conjugate distributions by the importance weight computed from the smoothed label distribution to impose probabilistic reweighting on the imbalanced data. This allows the negative log likelihood to naturally put more focus on the minority data, thereby balancing the accuracy for data with different regression labels. Our VIR framework is compatible with any deep regression models and can be trained end to end.}

We summarize our contributions as below:
\begin{compactitem}
\item We identify the problem of probabilistic deep imbalanced regression as well as two desiderata, balanced accuracy and uncertainty estimation, for the problem.
\item We propose VIR to simultaneously cover these two desiderata and achieve state-of-the-art performance compared to existing methods.
\item As a byproduct, we also provide strong baselines for benchmarking {high-quality uncertainty estimation and promising prediction performance on imbalanced datasets.}
\end{compactitem}

\section{Related Work}
\begin{wrapfigure}{t}{0.6\textwidth}
\centering
\vskip -0.23in
\includegraphics[width=0.6\textwidth]{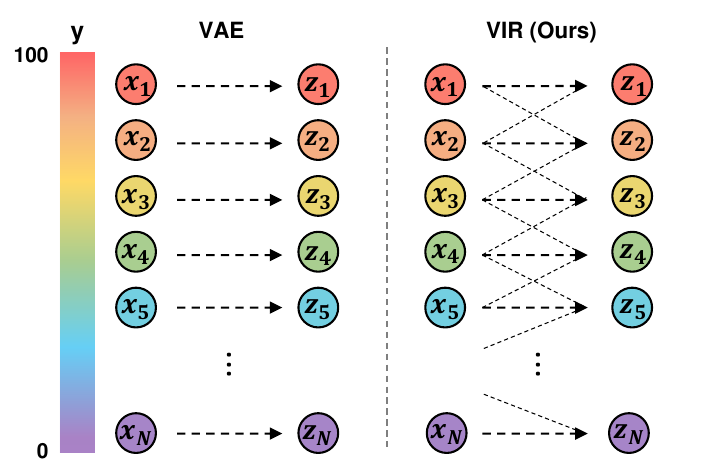}
\vskip -0.3cm
    \caption{Comparing inference networks of typical VAE~\citep{VAE} and our VIR. In VAE (\textbf{left}), a data point’s latent representation (i.e., $\z$) is affected only by itself, while in VIR (\textbf{right}), neighbors participate to modulate the final representation.}    
\vskip -0.7cm
\label{encoder-diff}
\end{wrapfigure}
\textbf{Variational Autoencoder.} 
Variational autoencoder (VAE)~\citep{VAE} is an unsupervised learning model that aims to infer probabilistic representations from data. However, as shown in Figure~\ref{encoder-diff}, VAE typically assumes I.I.D. representations, where a data point's representation is not directly affected by other data points. In contrast, our VIR borrows data with similar regression labels to compute the latent representation's variational distribution.

\textbf{Imbalanced Regression.} 
Imbalanced regression is under-explored in the machine learning community. Most existing methods for imbalanced regression are direct extensions of the SMOTE algorithm~\citep{SMOTE}, a commonly used algorithm for imbalanced classification, where data from the minority classes is over-sampled. These algorithms usually synthesize augmented data for the minority regression labels by either interpolating both inputs and labels~\citep{IRrelated1} or adding Gaussian noise~\citep{IRrelated2,IRrelated3} (more discussion on augmentation-based methods in the Appendix).

Such algorithms fail to measure the distance in continuous label space and fall short in handling high-dimensional data (e.g., images and text). Recently, DIR~\citep{DIR} addresses these issues by applying kernel density estimation to smooth and reweight data on the continuous label distribution, achieving state-of-the-art performance. However, DIR only focuses on improving the accuracy, especially for the data with minority labels, and therefore does not provide uncertainty estimation, which is crucial to assess the predictions' reliability. \cite{balancedMSE} focuses on re-balancing the mean squared error (MSE) loss for imbalanced regression, and \cite{RankSim} introduces ranking similarity for improving deep imbalanced regression. {In contrast, our VIR provides a principled probabilistic approach to simultaneously achieve these two desiderata, not only improving upon DIR in terms of performance but also producing reasonable uncertainty estimation as a much-needed byproduct to assess model reliability.} There is also related work on imbalanced classification~\citep{Long-Tailed-Age-Classification}, which is related to our work but focusing on classification rather than regression.

\textbf{Uncertainty Estimation in Regression.}
{There has been renewed interest in uncertainty estimation in the context of deep regression models~\citep{DER,uncertain6,uncertain9,kendall2017uncertainties,kuleshov2018accurate,TFuncertainty,uncertain8,song2019distribution,uncertain7,zelikman2020crude}.} Most existing methods directly predict the variance of the output distribution as the estimated uncertainty~\citep{DER,kendall2017uncertainties,zhang2019reducing}, rely on post-hoc confidence interval calibration~\citep{kuleshov2018accurate,song2019distribution,zelikman2020crude}, or using Bayesian neural networks~\citep{NPN,BDL,BDLSurvey}; there are also training-free approaches, such as Infer Noise and Infer Dropout~\citep{TFuncertainty}, which produce multiple predictions from different perturbed neurons and compute their variance as uncertainty estimation. Closest to our work is Deep Evidential Regression (DER)~\citep{DER}, which attempts to estimate both aleatoric and epistemic uncertainty~\citep{Uncertainty4ML,kendall2017uncertainties} on regression tasks by training the neural networks to directly infer the parameters of the evidential distribution, thereby producing uncertainty measures. DER~\citep{DER} is designed for the data-rich regime and therefore fails to reasonably estimate the uncertainty if the data is imbalanced; for data with minority labels, DER~\citep{DER} tends produce unstable distribution parameters, leading to poor uncertainty estimation (as shown in~\secref{sec:exp}).  In contrast, our proposed VIR explicitly handles data imbalance in the continuous label space to avoid such instability; VIR does so by modulating both the representations and the output conjugate distribution parameters according to the imbalanced label distribution, allowing training/inference to proceed as if the data is balance and leading to better performance as well as uncertainty estimation (as shown in~\secref{sec:exp}).

\section{Method}
{In this section we introduce the notation and problem setting, %problem setting, 
provide an overview of our VIR, and then describe details on each of VIR's key components.} 

\subsection{Notation and Problem Setting}\label{sec:problem}
Assuming an imbalanced dataset in continuous space $\{\x_i, y_i \}^{N}_{i=1}$ where $N$ is the total number of data points, $\x_i \in \mathbb{R}^{d}$ is the input, and $y_i\in \mathcal{Y} \subset \mathbb{R} $ is the corresponding label from a continuous label space $\mathcal{Y}$. In practice, $\mathcal{Y}$ is partitioned into B equal-interval bins $[y^{(0)}, y^{(1)}), [y^{(1)}, y^{(2)}), ..., [y^{(B-1)}, y^{(B)})$, with slight notation overload. To directly compare with baselines, we use the same grouping index for target value $b \in \mathcal{B}$ as in~\citep{DIR}. 
\begin{wrapfigure}{t}{0.5\textwidth}
\centering
\vskip -0.18in
\includegraphics[width=0.38\textwidth]{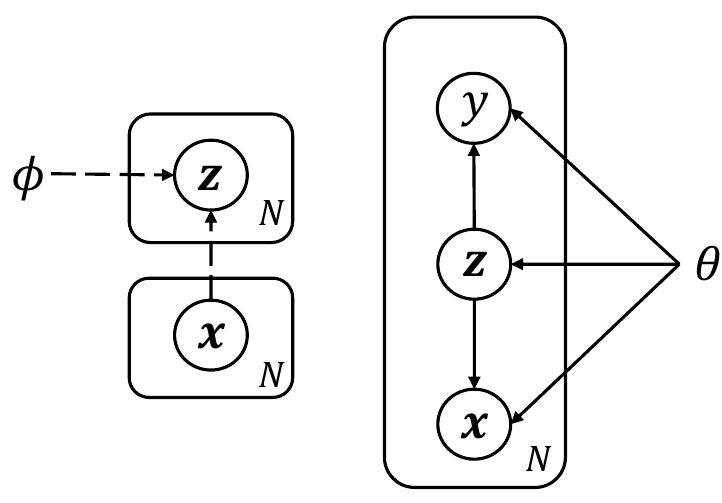}
\vskip -0.3cm
    \caption{Overview of our VIR method. \textbf{Left:} The inference model infers the latent representations given input $\x$'s in the neighborhood. \textbf{Right:} The generative model reconstructs the input and predicts the label distribution (including the associated uncertainty) given the latent representation.}
\vskip -1.2cm
\label{vir}
\end{wrapfigure}
%
% We denote representations as $\z_i$, and use $ (\Tilde{\z}_i^{\mu}, \Tilde{\z}_i^{\Sigma}) = q_{\phi}(\z|\x_i; \theta)$ to denote the probabilistic representations for input $\x_i$ generated by a probabilistic encoder parameterized by $\theta$. 
We denote representations as $\z_i$, and use $ (\Tilde{\z}_i^{\mu}, \Tilde{\z}_i^{\Sigma}) = q_{\phi}(\z|\x_i)$ to denote the probabilistic representations for input $\x_i$ generated by a probabilistic encoder parameterized by $\phi$; furthermore, we denote $\Bar{\z}$ as the mean of representation $\z_i$ in each bin, i.e., $\bar{\z} = \frac{1}{N_b} \sum^{N_b}_{i=1} \z_i$ for a bin with $N_b$ data points. 
{Similarly we use $ (\hat{y}_i,\hat{s}_i)$ to denote the mean and variance of the predictive distribution generated by a probabilistic predictor $p_{\theta}(y_i|\z)$.} 

\subsection{Method Overview}
In order to achieve both desiderata in probabilistic deep imbalanced regression (i.e., performance improvement and uncertainty estimation), our proposed variational imbalanced regression (VIR)  operates on both the encoder $q_{\phi}(\z_i|\{\x_i\}^{N}_{i=1})$ and the predictor $p_{\theta}(y_i|\z_i)$.

Typical VAE~\citep{VAE} lower-bounds input $\x_i$'s marginal likelihood; in contrast, VIR lower-bounds the marginal likelihood of input $\x_i$ and labels $y_i$: 
% {
%
% \begin{align*}
% \log p_{\theta}(\{x^{(i)}, y^{(i)}\}^{N}_{i=1}) = \sum^{N}_{i=1} \log p_{\theta}(x^{(i)}, y^{(i)}) = \mathcal{D}_{\mathcal{KL}}(q_{\phi}(z|x^{(i)})||p_{\theta}(z|x^{(i)})) + \mathcal{L}(\theta, \phi; x^{(i)}).
% \end{align*}
\begin{align*}
\log p_{\theta}(\x_i, y_i) = 
 \mathcal{D}_{\mathcal{KL}}\big(q_{\phi}(\z_i|\{\x_i\}^{N}_{i=1})||p_{\theta}(\z_i|\x_i,y_i)\big) + \mathcal{L}(\theta, \phi; \x_i, y_i).
\end{align*}
% }
Note that our variational distribution $q_{\phi}(\z_i|\{\x_i\}^{N}_{i=1})$ (1) does not condition on labels $y_i$, since the task is to predict $y_i$ and (2) conditions on all (neighboring) inputs $\{\x_i\}^{N}_{i=1}$ rather than just $\x_i$. 
The second term $\mathcal{L}(\theta, \phi; \x_i, y_i)$ is VIR's evidence lower bound (ELBO), which is defined as: 
\begingroup\makeatletter\def\f@size{9}\check@mathfonts
\def\maketag@@@#1{\hbox{\m@th\normalsize\normalfont#1}}%
\begin{align}
\mathcal{L}(\theta, \phi; \x_i, y_i) =   \underbrace{\mathbb{E}_{q} \big[\log p_{\theta}(\x_i|\z_i)\big]}_{\LM_i^{\DM}} +  \underbrace{\mathbb{E}_{q} \big[\log p_{\theta}(y_i|\z_i) \big]}_{\LM_i^{\PM}}
 - \underbrace{\mathcal{D}_{\mathcal{KL}}(q_{\phi}(\z_i|\{\x_i\}^{N}_{i=1})||p_{\theta}(\z_i))}_{\LM_i^{\KM\LM}}. \label{eq:elbo}
\end{align}
\endgroup
where {the $p_{\theta}(\z_i)$ is the standard Gaussian prior $\NM(\0,\I)$, following typical VAE~\citep{VAE},} and the expectation is taken over $q_{\phi}(\z_i|\{\x_i\}^{N}_{i=1})$, which infers $\z_i$ by borrowing data with similar regression labels to produce the balanced probabilistic representations, which is beneficial especially for the minority (see~\secref{sec:PRM} for details). 

Different from typical regression models which produce only point estimates for $y_i$, our VIR's predictor, $p_{\theta}(y_i|\z_i)$, directly produces the parameters of the entire NIG distribution for $y_i$ and further imposes probabilistic reweighting on the imbalanced data, thereby producing balanced predictive distributions (more details in~\secref{sec:CDM}). 

% We also add the reconstructor of typical VAE~\citep{VAE} to stabilize the latent variables. In the following, we introduce the details for constructing each of them. Formally, for each datapoint, we have
% %
% \begin{align*}
% q_{\phi}(z|x^{(i)}) &= \mathcal{H} (\mathcal{F} (x_i)) \\
% p_{\theta}(y^{(i)}|z) &= \mathcal{G} (z_i) \\
% p_{\theta}(x^{(i)}|z) &= \mathcal{R} (z_i)
% \end{align*}
% %
% where in the encoding stage, the original latent distribution is encoded from $\mathcal{F} (\cdot)$ and $\mathcal{H} (\cdot)$ is the calibration procedure which explained in detail below. Furthermore, in the predicting stage, $\mathcal{G} (\cdot)$ produced the full NIG distributions and those lead to the final prediction and uncertainty estimation, and the reconstructor $\mathcal{R} (\cdot)$ remains the same function in VAE~\citep{VAE}, which aims to recover the original inputs $x_i$ from its correspond latent representation.

\subsection{Constructing \texorpdfstring{$q(\z_i|\{\x_i\}_{i=1}^N)$}{q(zi|{xi}{i=1}{N})}} \label{sec:PRM}

To cover both desiderata, one needs to (1) produce \emph{balanced} representations to improve performance for the data with minority labels and (2) produce \emph{probabilistic} representations to naturally obtain reasonable uncertainty estimation for each model prediction. To learn such \emph{balanced probabilistic representations}, {we construct the encoder of our VIR (i.e., $q_{\phi}(\z_i|\{\x_i\}^{N}_{i=1})$) by (1) first encoding a data point into a \textbf{probabilistic representation}, (2) computing \textbf{probabilistic statistics} from neighboring representations (i.e., representations from data with similar regression labels), and (3) producing the final representations via \textbf{probabilistic whitening and recoloring} using the obtained statistics.} 

% \textbf{Probabilistic Representations.} 
% {We first encode each data point into a probabilistic representation.} Note that this is in contrast to existing work~\citep{DIR} that uses deterministic representations.
% We assume that each encoding $\z_i$ is a Gaussian distribution with parameters $\{\z_i^{\mu}, \z_i^{\Sigma} \}$, which are generated from the last layer in the deep neural network.

{\textbf{Intuition on Using Probabilistic Representation.} 
DIR uses deterministic representations, with one vector as the final representation for each data point. In contrast, our VIR uses probabilistic representations, with one vector as the mean of the representation and another vector as the variance of the representation. Such dual representation is more robust to noise and therefore leads to better prediction performance.} Therefore, We first encode each data point into a probabilistic representation. Note that this is in contrast to existing work~\citep{DIR} that uses deterministic representations.
We assume that each encoding $\z_i$ is a Gaussian distribution with parameters $\{\z_i^{\mu}, \z_i^{\Sigma} \}$, which are generated from the last layer in the deep neural network.

\textbf{From I.I.D. to Neighboring and Identically Distributed (N.I.D.).}
Typical VAE~\citep{VAE} is an unsupervised learning model that aims to learn a variational representation from latent space to reconstruct the original inputs under the I.I.D. assumption; that is, in VAE, the latent value (i.e., $\z_i$) is generated from its own input $\x_i$. 
This I.I.D. assumption works well for data with majority labels, but significantly harms performance for data with minority labels. 
{To address this problem, we replace the I.I.D. assumption with the N.I.D. assumption; specifically, VIR's variational latent representations still follow Gaussian distributions (i.e., $\mathcal{N} (\z_i^{\mu}, \z_i^{\Sigma}$), but these distributions will be first calibrated using data with neighboring labels. For a data point $(\x_i,y_i)$ where $y_i$ is in the $b$'th bin, i.e., $y_i\in[y^{(b-1)}, y^{(b)})$, we compute $q(\z_i|\{\x_i\}_{i=1}^N)\triangleq\NM(\z_i; \Tilde{\z}_i^{\mu}, \Tilde{\z}_i^{\Sigma})$ with the following four steps.} %as
% (1) averaging other distributions where their inputs contain the similar regression bins, and (2) applying smoothing kernel on the overall statistics of neighboring bins. Formally, we have
%想表达1.从拥有同样的label的input的distribution做平均； 2. 用kernel去加入neighbor的信息%
%
% \begin{align*}
%     z_i &\thicksim \mathcal{N} (\Tilde{z}_i^{\mu}, \Tilde{z}_i^{\Sigma}) \\
%     \Tilde{z}_i^{\mu}, \Tilde{z}_i^{\Sigma} &= \mathcal{H} (z_i^{\mu}, z_i^{\Sigma}) \\
%     z_i^{\mu}, z_i^{\Sigma} &= \mathcal{F} (x_i)
% \end{align*}
% \begin{align*}
% z_i^{\mu}, z_i^{\Sigma} = \mathcal{F} (x_i), \quad
% \Tilde{z}_i^{\mu}, \Tilde{z}_i^{\Sigma} = \mathcal{H} (z_i^{\mu}, z_i^{\Sigma}) \quad
% \Tilde{z}_i^{\mu}, \Tilde{z}_i^{\Sigma} = \mathcal{H} (z_i^{\mu}, z_i^{\Sigma}) \quad
%     z_i \thicksim \mathcal{N} (\Tilde{z}_i^{\mu}, \Tilde{z}_i^{\Sigma})
% \end{align*}
\begingroup\makeatletter\def\f@size{9.2}\check@mathfonts
\def\maketag@@@#1{\hbox{\m@th\normalsize\normalfont#1}}%
\begin{align}
&\mbox{\textbf{(1)} Mean and Covariance of Initial $\z_i$: } &\z_i^{\mu}, \z_i^{\Sigma} = \mathcal{I} (\x_i), \nonumber\\
&\mbox{\textbf{(2)} Statistics of Bin $b$'s Statistics: } &\muu_b^{\mu}, \muu_b^{\Sigma},\Si_b^{\mu}, \Si_b^{\Sigma} = \mathcal{A} (\{\z_i^{\mu},\z_i^{\Sigma}\}_{i=1}^N), \nonumber\\
&\mbox{\textbf{(3)} Smoothed Statistics of Bin $b$'s Statistics: }&\Tilde{\muu}_b^{\mu}, \Tilde{\muu}_b^{\Sigma},\Tilde{\Si}_b^{\mu}, \Tilde{\Si}_b^{\Sigma} = \mathcal{S} (\{\muu_b^{\mu}, \muu_b^{\Sigma},\Si_b^{\mu}, \Si_b^{\Sigma} \}_{b=1}^B), \nonumber\\
&\mbox{\textbf{(4)} Mean and Covariance of Final $\z_i$: } &\Tilde{\z}_i^{\mu}, \Tilde{\z}_i^{\Sigma} = \mathcal{F} (\z_i^{\mu}, \z_i^{\Sigma}, \muu_b^{\mu}, \muu_b^{\Sigma},\Si_b^{\mu}, \Si_b^{\Sigma},\Tilde{\muu}_b^{\mu}, \Tilde{\muu}_b^{\Sigma},\Tilde{\Si}_b^{\mu}, \Tilde{\Si}_b^{\Sigma}), \nonumber
\end{align}
\endgroup
{where the details of functions $\IM(\cdot)$, $\AM(\cdot)$, $\SM(\cdot)$, and $\FM(\cdot)$ are described below. }

%这里我准备放弃从DIR出发的角度，VAE出发
% \textbf{From Deterministic Statistics to Probabilistic Statistics.} 
% Given the initial probabilistic representations, we then estimate the \emph{probabilistic} statistics of each bin (as defined in \secref{sec:problem}). Here we start by reviewing the \emph{deterministic statistics} computed in~\citep{DIR}, where statistics of representations consist of \emph{deterministic overall statistics}, $\{ \mu_b, \Sigma_b \}$, which denote the mean and co-variance of representations $z$ in each bin, and \emph{deterministic smoothed statistics} $\{ \Tilde{\mu}_b, \Tilde{\Sigma}_b \}$, which are obtained after applying smoothing kernel on the overall statistics of neighboring bins. Formally, we have
% %
% \begin{align*}
%     \mu_{b} &= \frac{1}{N_b} \sum^{N_b}_{i=1} z_i, \ \ \ \ \ 
%     \Sigma_{b} = \frac{1}{N_b - 1} \sum^{N_b}_{i=1} (z_i - \mu_b)(z_i - \mu_b)^{T},\\ 
%     \Tilde{\mu}_b &= \sum_{b^{'} \in \mathcal{B}} k(y_{b}, y_{b'}) \mu_{b'}, \ \ \ \ \   
%     \Tilde{\Sigma}_b = \sum_{b^{'} \in \mathcal{B}} k(y_{b}, y_{b'}) \Sigma_{b'}.
% \end{align*}
% %
\textbf{(1) Function $\IM(\cdot)$: From Deterministic to Probabilistic Statistics.} 
Different from deterministic statistics in~\citep{DIR}, our VIR's encoder uses \emph{probabilistic statistics}, i.e., \emph{statistics of statistics}. Specifically, VIR treats $\z_i$ as a distribution with the mean and covariance $(\z_i^\mu,\z_i^\Sigma)=\IM(\x_i)$ rather than a deterministic vector. 

As a result, all the deterministic statistics for bin $b$, $\muu_{b}$, $\Si_{b}$, $\Tilde{\muu}_b$, and $\Tilde{\Si}_b$ are replaced by distributions with the means and covariances, $(\muu_{b}^{\mu}, \muu_{b}^{\Sigma})$, $(\Si_b^\mu,\Si_b^\Sigma)$, $(\Tilde{\muu}_b^{\mu}, \Tilde{\muu}_b^{\Sigma})$, and $(\Tilde{\Si}_b^\mu,\Tilde{\Si}_b^\Sigma)$, respectively (more details in the following three paragraphs on $\AM(\cdot)$, $\SM(\cdot)$, and $\FM(\cdot)$). 

\textbf{(2) Function $\AM(\cdot)$: Statistics of the Current Bin $b$'s Statistics.} 
In VIR, the \emph{deterministic overall mean} for bin $b$ (with $N_b$ data points), $\muu_{b}=\bar{\z}=\frac{1}{N_b} \sum^{N_b}_{i=1} \z_i$, becomes the \emph{probabilistic overall mean}, i.e., a distribution of $\muu_{b}$ with the mean $\muu_b^{\mu}$ and covariance $\muu_b^{\Sigma}$ (assuming diagonal covariance) as follows: 
% As part of our probabilistic overall statistics, the \emph{probabilistic overall mean} becomes a distribution with the mean \blue{(letting $\muu_{b}=\bar{\z}$)} and covariance (assuming diagonal covariance):
%
\begin{align*}
\muu_b^{\mu} &\triangleq \mathbb{E}[\Bar{\z}] = \frac{1}{N_b} \sum\nolimits^{N_b}_{i=1} \mathbb{E}[\z_i] = \frac{1}{N_b} \sum\nolimits^{N_b}_{i=1} \z_i^{\mu}, \\ 
\muu_b^{\Sigma} &\triangleq \mathbb{V}[\Bar{\z}] = \frac{1}{N_b^{2}} \sum\nolimits^{N_b}_{i=1} \mathbb{V}[\z_i]= \frac{1}{N_b^{2}} \sum\nolimits^{N_b}_{i=1} \z_i^{\Sigma}.
\end{align*}
Similarly, the \emph{deterministic overall covariance} for bin $b$, $\Si_{b}=\frac{1}{N_b} \sum^{N_b}_{i=1} (\z_i-\bar{\z})^2$, becomes the \emph{probabilistic overall covariance}, i.e., a matrix-variate distribution~\citep{gupta2018matrix} with the mean:
% our \emph{probabilistic overall covariance} becomes a matrix-variate distribution~\citep{gupta2018matrix} with the mean:
%
\begin{align*}
\Si_b^\mu &\triangleq \EB[\Si_b] = \frac{1}{N_b} \sum\nolimits_{i=1}^{N_b} \EB[(\z_i - \Bar{\z})^2] = \frac{1}{N_b} \sum\nolimits_{i=1}^{N_b} \Big[ \z_i^{\Sigma} + (\z_i^{\mu})^{2} - \Big( [\muu_b^{\Sigma}]_i + ([\muu_b^{\mu}]_i)^{2} \Big) \Big],
\end{align*}
since $\mathbb{E}[\Bar{\z}] = \muu_b^{\mu}$ and $\mathbb{V}[\Bar{\z}] = \muu_b^{\Sigma}$. Note that the covariance of $\Si_b$, i.e., $\Si_b^\Sigma\triangleq \VB[\Si_b]$, involves computing the fourth-order moments, which is computationally prohibitive. Therefore in practice, we directly set $\Si_b^\Sigma$ to zero for simplicity; empirically we observe that such simplified treatment already achieves promising performance improvement upon the state of the art. {More discussions on the idea of the hierarchical structure of the statistics of statistics for smoothing are in the Appendix.}

\textbf{(3) Function $\SM(\cdot)$: Neighboring Data and Smoothed Statistics.} 
Next, we can borrow data 
% with neighboring labels (
from neighboring label bins $b'$ 
to compute the smoothed statistics of the current bin $b$ {by applying a symmetric kernel $k(\cdot, \cdot)$ (e.g., Gaussian, Laplacian, and Triangular kernels).} Specifically, the \emph{probabilistic smoothed mean and covariance} are (assuming diagonal covariance):
\begin{align*}
\Tilde{\muu}_b^{\mu} = \sum\nolimits_{b^{'} \in \mathcal{B}} k(y_{b}, y_{b'}) \muu_{b'}^{\mu},\ \ \ 
\Tilde{\muu}_b^{\Sigma} = \sum\nolimits_{b^{'} \in \mathcal{B}} k^{2}(y_{b}, y_{b'}) \muu_{b'}^{\Sigma},\ \ \ 
\Tilde{\Si}_b^\mu = \sum\nolimits_{b^{'} \in \mathcal{B}} k(y_{b}, y_{b'}) \Si_{b'}.
\end{align*}
\textbf{(4) Function $\FM(\cdot)$: Probabilistic Whitening and Recoloring.} 
We develop a probabilistic version of the whitening and re-coloring procedure in~\citep{whitening, DIR}. Specifically, we produce the final probabilistic representation $\{ \Tilde{\z}_i^{\mu}, \Tilde{\z}_i^{\Sigma} \}$ for each data point as:
% {
%
\begin{align}
\Tilde{\z}_i^{\mu} &= (\z_i^{\mu} - \muu_{b}^{\mu}) \cdot \sqrt{\frac{\Tilde{\Si}_b^\mu}{\Si_b^\mu}} + \Tilde{\muu}_b^{\mu}, \ \ \ \ \ \ 
\Tilde{\z}_i^{\Sigma} = (\z_i^{\Sigma} + \muu_{b}^{\Sigma}) \cdot \sqrt{\frac{\Tilde{\Si}_b^\mu}{\Si_b^\mu}} + \Tilde{\muu}_b^{\Sigma}.\label{eq:final_mu_sigma}
\end{align}
%
% }
% Inspired by~\citep{DIR}, 
During training, we keep updating the probabilistic overall statistics, $\{ \muu_{b}^{\mu}, \muu_{b}^{\Sigma}, \Si_b^{\mu} \}$, and the probabilistic smoothed statistics, $\{ \Tilde{\muu}_b^{\mu}, \Tilde{\muu}_b^{\Sigma}, \Tilde{\Si}_b^{\mu} \}$, across different epochs. 
The probabilistic representation $\{ \Tilde{\z}_i^{\mu}, \Tilde{\z}_i^{\Sigma} \}$ are then re-parameterized~\citep{VAE} into the final representation $\z_i$, and passed into the final layer (discussed in~\secref{sec:CDM}) to generate the prediction and uncertainty estimation. {Note that the computation of statistics from multiple $\x$'s is only needed during training. During testing, VIR directly uses these statistics and therefore does not need to re-compute them.} 
% In~\secref{sec:exp} \red{we show that the encoder of VIR can easily replace its deterministic counterpart in~\citep{DIR} to boost performance.}

\subsection{Constructing \texorpdfstring{$p(y_i|\z_i)$}{p(yi|zi)}}\label{sec:CDM}

% {In the predictor stage, Our VIR is motivated by the following observations on label distribution smoothing (LDS) in~\citep{DIR} and deep evidental regression (DER) in~\citep{DER}, as well as intuitions on effective counts in conjugate distributions. Formally, we have 
% %
% \begin{align}
% \hat{y},\hat{s} = \mathcal{G} (\z_i),
% \end{align}
% %
% where $\mathcal{G} (\cdot)$ produces the full NIG distributions, leading to the final prediction and uncertainty estimation, }

Our VIR's predictor $p(y_i|\z_i)\triangleq\NM(y_i;\hat{y}_i,\hat{s}_i)$ predicts both the mean and variance for $y_i$ by first predicting the NIG distribution and then marginalizing out the latent variables. 
It is motivated by the following observations on label distribution smoothing (LDS) in~\citep{DIR} and deep evidental regression (DER) in~\citep{DER}, as well as intuitions on effective counts in conjugate distributions. 
% Formally, we have 
% %
% \begin{align}
% \hat{y},\hat{s} = \mathcal{G} (\z_i),
% \end{align}
% %
% where $\mathcal{G} (\cdot)$ produces the full NIG distributions, leading to the final prediction and uncertainty estimation, 

\textbf{LDS's Limitations in Our Probabilistic Imbalanced Regression Setting.} The motivation of LDS~\citep{DIR} is that the empirical label distribution can not reflect the real label distribution in an imbalanced dataset with a continuous label space;
consequently, reweighting methods for imbalanced regression fail due to these inaccurate label densities.
By applying a smoothing kernel on the empirical label distribution, LDS tries to recover the effective label distribution, with which reweighting methods can obtain `better' weights to improve imbalanced regression. 
However, in our probabilistic imbalanced regression, one needs to consider both (1) prediction accuracy for the data with minority labels and (2) uncertainty estimation for each model. Unfortunately, LDS only focuses on improving the accuracy, especially for the data with minority labels, and therefore does not provide uncertainty estimation, which is crucial to assess the predictions' reliability.

\textbf{DER's Limitations in Our Probabilistic Imbalanced Regression Setting.} In DER~\citep{DER}, the predicted labels with their corresponding uncertainties are represented by the approximate posterior parameters $( \gamma, \nu, \alpha, \beta )$ of the NIG distribution $NIG ( \gamma, \nu, \alpha, \beta )$. A DER model is trained via minimizing the negative log-likelihood (NLL) of a Student-t distribution: 
\begin{align}
\mathcal{L}_i^{DER} = \frac{1}{2}\log(\frac{\pi}{\nu})
    + (\alpha+\frac{1}{2})\log((y_i-\gamma)^2\nu+\Omega) - \alpha\log(\Omega)
    + \log(\frac{\Gamma(\alpha)}{\Gamma(\alpha+\frac{1}{2})}),\label{eq:DERnll}
\end{align}
where $\Omega=2\beta(1+\nu)$. It is therefore nontrivial to properly incorporate a reweighting mechanism into the NLL. One straightforward approach is to directly reweight $\mathcal{L}_i^{DER}$ for different data points $(\x_i,y_i)$. However, this contradicts the formulation of NIG and often leads to poor performance, as we verify in~\secref{sec:exp}.

\textbf{Intuition of Pseudo-Counts for VIR.} To properly incorporate different reweighting methods, our VIR relies on the intuition of pseudo-counts (pseudo-observations) in conjugate distributions~\citep{PRML}. 
Assuming Gaussian likelihood, the \emph{conjugate distributions} would be an NIG distribution~\citep{PRML}, i.e., $(\mu, \Sigma) \sim NIG (\gamma, \nu, \alpha, \beta)$, which means:
\begin{align*}
% (\mu, \Sigma) \sim NIG (\gamma, v, \alpha, \beta)\\
\mu \sim \mathcal{N} (\gamma, \Sigma / \nu), \ \ \ 
\Sigma \sim \Gamma^{-1} (\alpha, \beta),
\end{align*}
where $\Gamma^{-1} (\alpha, \beta)$ is an inverse gamma distribution. With an NIG prior distribution $NIG( \gamma_{0}, \nu_{0}, \alpha_{0}, \beta_{0} )$, the posterior distribution of the NIG after observing $n$ real data points $\{u_i\}_{i=1}^n$ are: 
\begin{align}
\gamma_n = \frac{\gamma_{0} \nu_{0} + n \Psi}{\nu_{n}}, \quad
\nu_{n} = \nu_{0} + n, \quad
\alpha_{n} = \alpha_{0} + \frac{n}{2}, \quad
\beta_{n} = \beta_{0} + \frac{1}{2}(\gamma_{0}^{2} \nu_{0}) + \Phi, \label{eq:nig}
\end{align}
% \begin{align}
% \gamma_n &= \frac{\gamma_{0} \nu_{0} + n \Psi}{\nu_{n}}, \quad
% \nu_{n} = \nu_{0} + n, \label{eq:nig_gamma}\\
% \alpha_{n} &= \alpha_{0} + \frac{n}{2}, \quad
% \beta_{n} = \beta_{0} + \frac{1}{2}(\gamma_{0}^{2} \nu_{0}) + \Omega, \label{eq:nig_beta}
% \end{align}
% \begin{align}
% \gamma_n &= \frac{\gamma_{0} \nu_{0} + n \Psi}{\nu_{n}}, \label{eq:nig_gamma}\\
% \nu_{n} &= \nu_{0} + n, \label{eq:nig_nu}\\
% \alpha_{n} &= \alpha_{0} + \frac{n}{2}, \label{eq:nig_alpha}\\
% \beta_{n} &= \beta_{0} + \frac{1}{2}(\gamma_{0}^{2} \nu_{0}) + \Omega, \label{eq:nig_beta}
% \end{align}
%
where $\Psi = \Bar{u}$ and $\Phi = \frac{1}{2}(\sum_{i} u_{i}^{2} - \gamma_{n}^{2} \nu_{n})$. %上面的x似乎需要被替换%
Here $\nu_0$ and $\alpha_0$ can be interpreted as virtual observations, i.e., \emph{pseudo-counts or pseudo-observations} that contribute to the posterior distribution. Overall, the mean of posterior distribution above can be interpreted as an estimation from $(2\alpha_0+n)$ observations, with $2\alpha_0$ virtual observations and $n$ real observations. 
Similarly, the variance can be interpreted an estimation from $(\nu_0+n)$ observations. This intuition is crucial in {developing our VIR's predictor.} 

\textbf{From Pseudo-Counts to Balanced Predictive Distributions.} {Based on the intuition above, we construct our predictor (i.e., $p(y_i|\z_i)$) by (1) generating the parameters in the posterior distribution of NIG, (2) computing re-weighted parameters by imposing the importance weights obtained from LDS, and (3) producing the final prediction with corresponding uncertainty estimation.}

Based on \eqnref{eq:nig}, we feed the final representation $\{ \z_i \}_{i=1}^{N}$ generated from the~\secref{sec:PRM} (\eqnref{eq:final_mu_sigma}) into a linear layer to output the intermediate parameters $n_i, \Psi_i, \Phi_i$ for data point $(\x_i,y_i)$:
\begin{align*}
n_i, \Psi_i, \Phi_i = \GM(\z_i), \quad \z_i \sim q(\z_i|\{\x_i\}_{i=1}^N) = \NM(\z_i; \Tilde{\z}_i^{\mu}, \Tilde{\z}_i^{\Sigma})
\end{align*}
We then apply the importance weights $\big(\sum_{b' \in \mathcal{B}} k (y_b, y_{b'}) p(y_{b'})\big)^{-1/2}$ calculated from the smoothed label distribution to the \emph{pseudo-count} $n_i$ to produce the re-weighted parameters of posterior distribution of NIG, where $p(y)$ denotes the marginal distribution of $y$. Along with the pre-defined prior parameters $( \gamma_{0}, \nu_{0}, \alpha_{0}, \beta_{0} )$, we are able to compute the parameters of posterior distribution $NIG( \gamma_{i}, \nu_{i}, \alpha_{i}, \beta_{i} )$ for $(\x_i,y_i)$:
\begin{align*}
\gamma_i^{*} &= \frac{\gamma_{0} \nu_{0} + \big(\sum_{b' \in \mathcal{B}} k (y_b, y_{b'}) p(y_{b'})\big)^{-1/2} \cdot n_i \Psi_i}{\nu_{n}^{*}}, \ \ \ \ \ 
\nu_{i}^{*} = \nu_{0} + \big(\sum_{b' \in \mathcal{B}} k (y_b, y_{b'}) p(y_{b'})\big)^{-1/2} \cdot n_i, \\
\alpha_{i}^{*} &= \alpha_{0} + \big(\sum_{b' \in \mathcal{B}} k (y_b, y_{b'}) p(y_{b'})\big)^{-1/2} \cdot \frac{n_i}{2},\ \ \ \ \ 
\beta_{i}^{*} = \beta_{0} + \frac{1}{2}(\gamma_{0}^{2} \nu_{0}) + \Phi_i.
\end{align*}
Based on the NIG posterior distribution, we can then compute final prediction and uncertainty estimation as
\begin{align*}
\hat{y}_i = \gamma_{i}^{*}, \ \ \ 
\hat{s}_i = \frac{\beta_{i}^{*}}{\nu_{i}^{*} (\alpha_{i}^{*} - 1)}.
\end{align*}
We use an objective function similar to~\eqnref{eq:DERnll}, but with different definitions of $(\gamma, \nu, \alpha, \beta)$, to optimize {our VIR model}:
\begingroup\makeatletter\def\f@size{8}\check@mathfonts
\def\maketag@@@#1{\hbox{\m@th\normalsize\normalfont#1}}%
\begin{align}
\mathcal{L}_i^{\mathcal{P}} = \mathbb{E}_{q_{\phi}(\z_i|\{\x_i\}^{N}_{i=1})} \big[\frac{1}{2}\log(\frac{\pi}{\nu_i^*}) + (\alpha_i^*+\frac{1}{2})\log((y_i-\gamma_i^*)^2\nu_n^*+\Omega) - \alpha_i^*\log(\omega_i^*) + \log(\frac{\Gamma(\alpha_i^*)}{\Gamma(\alpha_i^*+\frac{1}{2})})\big] \label{eq:CDMnll},
\end{align}
\endgroup
where $\omega_i^*=2\beta_i^*(1+\nu_i^*)$. Note that $\mathcal{L}_i^{\mathcal{P}}$ is part of the ELBO in~\eqnref{eq:elbo}. 
Similar to~\citep{DER}, we use an additional regularization term to achieve better accuracy
% \footnote{Note that in DER, the total evidence $\Phi$ has a value $2\nu+\alpha$, but to the best of our knowledge, it would be more reasonable to use $\nu + 2\alpha$ as the total evidence for an NIG distribution~\citep{PRML}.}
:
\begin{align*}
\mathcal{L}_{i}^{\mathcal{R}} &= (\nu + 2\alpha)\cdot |y_i - \hat{y}_i|.
\end{align*}
$\mathcal{L}_{i}^{\mathcal{P}}$ and $\mathcal{L}_{i}^{\mathcal{R}}$ together constitute the objective function for learning the predictor $p(\y_i|\z_i)$.
\begin{table}[tbp]
\begin{adjustbox}{valign=t}
\centering
\begin{minipage}{0.49\textwidth}
\centering
\setlength{\tabcolsep}{2pt}
\caption{Accuracy on AgeDB-DIR.}
\vspace{-2pt}
\label{table:agedb-accuracy}
\small
\begin{center}
\resizebox{1\textwidth}{!}{
\begin{tabular}{l|cccc|cccc}
\toprule[1.5pt]
Metrics  & \multicolumn{4}{c|}{MAE~$\downarrow$} & \multicolumn{4}{c}{GM~$\downarrow$}  \\ \midrule
Shot & all & many & medium & few & all & many & medium & few \\ \midrule\midrule
\textsc{Vanilla}~\citep{DIR} & 7.77 & 6.62 & 9.55 & 13.67 & 5.05 & 4.23 & 7.01 & 10.75 \\[1.5pt]
\textsc{VAE}~\citep{VAE} & 7.63 & 6.58 & 9.21 & 13.45 & 4.86 & 4.11 & 6.61 & 10.24 \\[1.5pt]
\textsc{Deep Ens.}~\citep{DeepEnsemble} & 7.73 & 6.62 & 9.37 & 13.90 & 4.87 & 4.37 & 6.50 & 11.35 \\[1.5pt]
\textsc{Infer Noise}~\citep{TFuncertainty} & 8.53 & 7.62 & 9.73 & 13.82 & 5.57 & 4.95 & 6.58 & 10.86 \\[1.5pt]
\textsc{SmoteR}~\citep{IRrelated1} & 8.16 & 7.39 & 8.65 & 12.28 & 5.21 & 4.65 & 5.69 & 8.49 \\[1.5pt]
\textsc{SMOGN}~\citep{IRrelated2} & 8.26 & 7.64 & 9.01 & 12.09 & 5.36 & 4.9 & 6.19 & 8.44 \\[1.5pt]
\textsc{SQInv}~\citep{DIR} & 7.81 & 7.16 & 8.80 & 11.2 & 4.99 & 4.57 & 5.73 & 7.77 \\[1.5pt]
\textsc{DER}~\citep{DER} & 8.09 & 7.31 & 8.99 & 12.66 & 5.19 & 4.59 & 6.43 & 10.49 \\[1.5pt]
\textsc{LDS}~\citep{DIR} & 7.67 & 6.98 & 8.86 & 10.89 & 4.85 & 4.39 & 5.8 & 7.45 \\[1.5pt]
\textsc{FDS}~\citep{DIR} & 7.69 & 7.10 & 8.86 & 9.98 & 4.83 & 4.41 & 5.97 & 6.29 \\[1.5pt]
\textsc{LDS + FDS}~\citep{DIR} & 7.55 & 7.01 & 8.24 & 10.79 & 4.72 & 4.36 & 5.45 & 6.79 \\[1.5pt]
\textsc{RANKSIM}~\citep{RankSim} & 7.02 & 6.49 & 7.84 & 9.68 & 4.53 & 4.13 & 5.37 & 6.89 \\[1.5pt]
\textsc{LDS + FDS + DER}~\citep{DER} & 8.18 & 7.44 & 9.52 & 11.45 & 5.30 & 4.75 & 6.74 & 7.68 \\[1.5pt]
\textsc{VIR (Ours)} & \textbf{6.99} & \textbf{6.39} & \textbf{7.47} & \textbf{9.51} & \textbf{4.41} & \textbf{4.07} & \textbf{5.05} & \textbf{6.23} \\[1.5pt] \midrule\midrule
\textsc{\textbf{Ours} vs. Vanilla} & \textcolor{ForestGreen}{\textbf{+0.78}} & \textcolor{ForestGreen}{\textbf{+0.23}} & \textcolor{ForestGreen}{\textbf{+2.08}} & \textcolor{ForestGreen}{\textbf{+4.16}} & \textcolor{ForestGreen}{\textbf{+0.64}} & \textcolor{ForestGreen}{\textbf{+0.16}} & \textcolor{ForestGreen}{\textbf{+1.96}} & \textcolor{ForestGreen}{\textbf{+4.52}}  \\[1.5pt]
\textsc{\textbf{Ours} vs. Infer Noise} & \textcolor{ForestGreen}{\textbf{+1.54}} & \textcolor{ForestGreen}{\textbf{+1.23}} & \textcolor{ForestGreen}{\textbf{+2.26}} & \textcolor{ForestGreen}{\textbf{+4.31}} & \textcolor{ForestGreen}{\textbf{+1.16}} & \textcolor{ForestGreen}{\textbf{+0.88}} & \textcolor{ForestGreen}{\textbf{+1.53}} & \textcolor{ForestGreen}{\textbf{+4.63}}  \\[1.5pt]
\textsc{\textbf{Ours} vs. DER} & \textcolor{ForestGreen}{\textbf{+1.10}} & \textcolor{ForestGreen}{\textbf{+0.92}} & \textcolor{ForestGreen}{\textbf{+1.52}} & \textcolor{ForestGreen}{\textbf{+3.15}} & \textcolor{ForestGreen}{\textbf{+0.78}} & \textcolor{ForestGreen}{\textbf{+0.52}} & \textcolor{ForestGreen}{\textbf{+1.38}} & \textcolor{ForestGreen}{\textbf{+4.26}}  \\[1.5pt]
\textsc{\textbf{Ours} vs. LDS + FDS} & \textcolor{ForestGreen}{\textbf{+0.56}} & \textcolor{ForestGreen}{\textbf{+0.62}} & \textcolor{ForestGreen}{\textbf{+0.77}} & \textcolor{ForestGreen}{\textbf{+1.28}} & \textcolor{ForestGreen}{\textbf{+0.31}} & \textcolor{ForestGreen}{\textbf{+0.29}} & \textcolor{ForestGreen}{\textbf{+0.40}} & \textcolor{ForestGreen}{\textbf{+0.56}}  \\[1.5pt]
\textsc{\textbf{Ours} vs. RANKSIM} & \textcolor{ForestGreen}{\textbf{+0.03}} & \textcolor{ForestGreen}{\textbf{+0.10}} & \textcolor{ForestGreen}{\textbf{+0.37}} & \textcolor{ForestGreen}{\textbf{+0.17}} & \textcolor{ForestGreen}{\textbf{+0.12}} & \textcolor{ForestGreen}{\textbf{+0.06}} & \textcolor{ForestGreen}{\textbf{+0.32}} & \textcolor{ForestGreen}{\textbf{+0.66}} \\
\bottomrule[1.5pt]
\end{tabular}
}
\end{center}
\vspace{-0.5cm}
\end{minipage}
\end{adjustbox}
\hfill
\begin{adjustbox}{valign=t}
\centering
\begin{minipage}{0.49\textwidth}
\centering
\setlength{\tabcolsep}{2pt}
\caption{Accuracy on IW-DIR.}
\vspace{-2pt}
\label{table:imdb-wiki-accuracy}
\small
\begin{center}
\resizebox{1\textwidth}{!}{
\begin{tabular}{l|cccc|cccc}
\toprule[1.5pt]
Metrics & \multicolumn{4}{c|}{MAE~$\downarrow$} & \multicolumn{4}{c}{GM~$\downarrow$}  \\ \midrule
Shot & All  & Many & Medium & Few   & All  & Many & Medium & Few   \\ \midrule\midrule
\textsc{Vanilla}~\citep{DIR} & 8.06 & 7.23 & 15.12 & 26.33 & 4.57 & 4.17 & 10.59 & 20.46 \\[1.5pt]
\textsc{VAE}~\citep{VAE} & 8.04 & 7.20 & 15.05 & 26.30 & 4.57 & 4.22 & 10.56 & 20.72 \\[1.5pt]
\textsc{Deep Ens.}~\citep{DeepEnsemble} & 8.08 & 7.31 & 15.09 & 26.47 & 4.59 & 4.26 & 10.61 & 21.13 \\[1.5pt]
\textsc{Infer Noise}~\citep{TFuncertainty} & 8.11 & 7.36 & 15.23 & 26.29 & 4.68 & 4.33 & 10.65 & 20.31 \\[1.5pt]
\textsc{SmoteR}~\citep{IRrelated1} & 8.14 & 7.42 & 14.15 & 25.28 & 4.64 & 4.30 & 9.05 & 19.46 \\[1.5pt]
\textsc{SMOGN}~\citep{IRrelated2} & 8.03 & 7.30 & 14.02 & 25.93 & 4.63 & 4.30 & 8.74 & 20.12 \\[1.5pt]
\textsc{SQInv}~\citep{DIR} & 7.87 & 7.24 & 12.44 & 22.76 & 4.47 & 4.22 & 7.25 & 15.10 \\[1.5pt]
\textsc{DER}~\citep{DER} & 7.85 & 7.18 & 13.35 & 24.12 & 4.47 & 4.18 & 8.18 & 15.18 \\[1.5pt]
\textsc{LDS}~\citep{DIR} & 7.83 & 7.31 & 12.43 & 22.51 & 4.42 & 4.19 & 7.00 & 13.94 \\[1.5pt]
\textsc{FDS}~\citep{DIR} & 7.83 & 7.23 & 12.60 & 22.37 & 4.42 & 4.20 & 6.93 & 13.48 \\[1.5pt]
\textsc{LDS + FDS}~\citep{DIR} & 7.78 & 7.20 & 12.61 & 22.19 & 4.37 & 4.12 & 7.39 & 12.61 \\[1.5pt]
\textsc{RANKSIM}~\citep{RankSim} & 7.50 & 6.93 & 12.09 & 21.68 & 4.19 & 3.97 & 6.65 & 13.28 \\[1.5pt]
\textsc{LDS + FDS + DER}~\citep{DER} & 7.24 & 6.64 & 11.87 & 23.44 & 3.93 & 3.69 & 6.64 & 16.00 \\[1.5pt]
\textsc{VIR (Ours)} & \textbf{7.19} & \textbf{6.56} & \textbf{11.81} & \textbf{20.96} & \textbf{3.85} & \textbf{3.63} & \textbf{6.51} & \textbf{12.23} \\[1.5pt] \midrule\midrule
\textsc{\textbf{Ours} vs. Vanilla} & \textcolor{ForestGreen}{\textbf{+0.87}} & \textcolor{ForestGreen}{\textbf{+0.67}} & \textcolor{ForestGreen}{\textbf{+3.31}} & \textcolor{ForestGreen}{\textbf{+5.37}} & \textcolor{ForestGreen}{\textbf{+0.72}} & \textcolor{ForestGreen}{\textbf{+0.54}} & \textcolor{ForestGreen}{\textbf{+4.08}} & \textcolor{ForestGreen}{\textbf{+8.23}} \\[1.5pt]
\textsc{\textbf{Ours} vs. Infer Noise} & \textcolor{ForestGreen}{\textbf{+0.92}} & \textcolor{ForestGreen}{\textbf{+0.80}} & \textcolor{ForestGreen}{\textbf{+3.42}} & \textcolor{ForestGreen}{\textbf{+5.33}} & \textcolor{ForestGreen}{\textbf{+0.83}} & \textcolor{ForestGreen}{\textbf{+0.70}} & \textcolor{ForestGreen}{\textbf{+4.14}} & \textcolor{ForestGreen}{\textbf{+8.08}} \\[1.5pt]
\textsc{\textbf{Ours} vs. DER} & \textcolor{ForestGreen}{\textbf{+0.66}} & \textcolor{ForestGreen}{\textbf{+0.62}} & \textcolor{ForestGreen}{\textbf{+1.54}} & \textcolor{ForestGreen}{\textbf{+3.16}} & \textcolor{ForestGreen}{\textbf{+0.62}} & \textcolor{ForestGreen}{\textbf{+0.55}} & \textcolor{ForestGreen}{\textbf{+1.67}} & \textcolor{ForestGreen}{\textbf{+2.95}} \\[1.5pt]
\textsc{\textbf{Ours} vs. LDS + FDS} & \textcolor{ForestGreen}{\textbf{+0.59}} & \textcolor{ForestGreen}{\textbf{+0.64}} & \textcolor{ForestGreen}{\textbf{+0.8}} & \textcolor{ForestGreen}{\textbf{+1.23}} & \textcolor{ForestGreen}{\textbf{+0.52}} & \textcolor{ForestGreen}{\textbf{+0.49}} & \textcolor{ForestGreen}{\textbf{+0.88}} & \textcolor{ForestGreen}{\textbf{+0.38}} \\[1.5pt]
\textsc{\textbf{Ours} vs. RANKSIM} & \textcolor{ForestGreen}{\textbf{+0.31}} & \textcolor{ForestGreen}{\textbf{+0.37}} & \textcolor{ForestGreen}{\textbf{+0.28}} & \textcolor{ForestGreen}{\textbf{+0.72}} & \textcolor{ForestGreen}{\textbf{+0.34}} & \textcolor{ForestGreen}{\textbf{+0.34}} & \textcolor{ForestGreen}{\textbf{+0.14}} & \textcolor{ForestGreen}{\textbf{+1.05}} \\
\bottomrule[1.5pt]
\end{tabular}
}
\end{center}
\vspace{-0.5cm}
\end{minipage}
\end{adjustbox}
\end{table} % age + iw acc
\subsection{Final Objective Function}\label{ssec:final_obj_func}
% We also add the reconstructor in typical VAE~\citep{VAE} into VIR to better stabilize the latent space, then 
Putting together~\secref{sec:PRM} and~\secref{sec:CDM}, our final objective function (to minimize) for VIR is:
\begingroup\makeatletter\def\f@size{9.5}\check@mathfonts
\def\maketag@@@#1{\hbox{\m@th\large\normalfont#1}}%
\begin{align*}
\mathcal{L}^{\mathcal{VIR}} = \sum\nolimits_{i=1}^N \mathcal{L}_i^{\mathcal{VIR}}, \quad
\mathcal{L}_i^{\mathcal{VIR}} = \lambda \mathcal{L}_{i}^{\mathcal{R}} -\mathcal{L}(\theta, \phi; \x_i, y_i) 
= \lambda \mathcal{L}_{i}^{\mathcal{R}} - \mathcal{L}_{i}^{\mathcal{P}} - \mathcal{L}_{i}^{\mathcal{D}} +\mathcal{L}_{i}^{\mathcal{KL}} ,
\end{align*}
\endgroup
where $\mathcal{L}(\theta, \phi; \x_i, y_i)=\mathcal{L}_{i}^{\mathcal{P}} + \mathcal{L}_{i}^{\mathcal{D}} -\mathcal{L}_{i}^{\mathcal{KL}}$ is the ELBO in~\eqnref{eq:elbo}. 
$\lambda$ adjusts the importance of the additional regularizer and the ELBO, and thus lead to a better result both on accuracy and uncertainty estimation.

\section{Theory}
\textbf{Notation.} 
As mentioned in Sec.~\ref{sec:problem}, we partitioned $\{Y_{j} \}_{j=1}^{N}$ into $|\mathcal{B}|$ equal-interval bins (denote the set of bins as $\mathcal{B}$), and $\{Y_{j} \}_{j=1}^{N}$ are sampled from the label space $\mathcal{Y}$. {In addition, We use the binary set $\{ P_{i} \}_{i=1}^{|\mathcal{B}|}$ to represent the label distribution (frequency) for each bin $i$, i.e., $P_{i} \triangleq \mathbb{P} (Y_{j} \in \mathcal{B}_{i})$. We also use the binary set $\{ O_{i} \}_{j=1}^{N}$ to represent whether the data point $( \x_{j}, y_{j} )$ is observed (i.e., $O_{j} \sim \mathbb{P} (O_{j}=1) \propto P_{B(Y_{j})}$, and $\mathbb{E}_{O} [O_j] \sim P_{B(Y_{j})}$), where $B(Y_{j})$ represents the bin which $(x_j, y_j)$ belongs to.} 
For each bin $i \in \mathcal{B}$, we denote the associated set of data points as 
\begin{align*}
    \mathcal{U}_{i} \triangleq \{j: Y_{j}=i\}.
\end{align*}
When the imbalanced dataset is partially observed, we denote the observation set as: 
\begin{align*}
    \mathcal{S}_{i} \triangleq \{ j: O_{j}=1 \And B(Y_{j})=i \}.
\end{align*}

\begin{definition}[\textbf{Expectation over Observability} $\mathbb{E}_{O}$]
We define the expectation over the observability variable $O$ as $\mathbb{E}_{O} [\cdot] \equiv \mathbb{E}_{O_{j} \sim \mathbb{P} (O_{j} = 1)} [\cdot]$.
\end{definition}

\begin{definition}[\textbf{True Risk}]
Based on the previous definitions, the true risk is defined as:
\begin{align*}
    R(\hat{Y}) = \dfrac{1}{|\mathcal{B}|} \sum_{i=1}^{|\mathcal{B}|} \dfrac{1}{|\mathcal{U}_{i}|} \sum_{j \in \mathcal{U}_{i}} \delta_{j} (Y, \hat{Y}),
\end{align*}
where $\delta_{j} (Y, \hat{Y})$ refers to some loss function (e.g. MAE, MSE). In this paper, we assume the loss is upper bounded by $\Delta$, i.e., $0 \leq \delta_{j} (Y, \hat{Y}) \leq \Delta$.
\end{definition}
Below we define the Naive Estimator.
\begin{definition}[\textbf{Naive Estimator}]\label{def:naive}
Given the observation set, the Naive Estimator is defined as:
\begin{align*}
    \hat{R}_{\textsc{NAIVE}}(\hat{Y}) = \dfrac{1}{ \sum_{i=1}^{|\mathcal{B}|} |\mathcal{S}_{i}|} \sum_{i=1}^{|\mathcal{B}|} \enspace \sum_{j \in \mathcal{S}_{i}} \delta_{j} (Y, \hat{Y}).
\end{align*}
\end{definition}
It is easy to verify that the expectation of this naive estimator is not equal to the true risk, as $\mathbb{E}_{O} [\hat{R}_{\textsc{NAIVE}}(\hat{Y})] \neq R(\hat{Y})$.

Considering an imbalanced dataset as a subset of observations from a balanced one, we contrast it with the Inverse Propensity Score (IPS) estimator~\citep{schnabel2016recommendations}. %, underscoring the advantages of our approach.
\begin{definition}[\textbf{Inverse Propensity Score Estimator}]\label{def:ips}
The inverse propensity score (IPS) estimator (an unbiased estimator) is defined as
\begin{align*}
    \hat{R}_{\textsc{IPS}}(\hat{Y} | P) = \dfrac{1}{|\mathcal{B}|} \sum_{i=1}^{|\mathcal{B}|} \dfrac{1}{|\mathcal{U}_{i}|} \sum_{j \in \mathcal{S}_{i}} \dfrac{\delta_{j} (Y, \hat{Y})}{P_{i}}.
\end{align*}
\end{definition}

% IPS estimator is an unbiased estimator. 
% \begin{proof} 
The IPS estimator is an unbiased estimator, as we can verify by taking the expectation value over the observation set:
\begin{align*}
    \mathbb{E}_{O} [\hat{R}_{\textsc{IPS}}(\hat{Y} | P)] &= \dfrac{1}{|\mathcal{B}|} \sum_{i=1}^{|\mathcal{B}|} \dfrac{1}{|\mathcal{U}_{i}|} \sum_{j \in \mathcal{U}_{i}} \dfrac{\delta_{j} (Y, \hat{Y})}{P_{i}} \cdot \mathbb{E}_{O} [O_{j}] \\ &= \dfrac{1}{|\mathcal{B}|} \sum_{i=1}^{|\mathcal{B}|} \dfrac{1}{|\mathcal{U}_{i}|} \sum_{j \in \mathcal{U}_{i}} \delta_{j} (Y, \hat{Y}) = R(\hat{Y}).
\end{align*}
%
% \end{proof}
{Finally, we define our VIR/DIR estimator below.}
\begin{definition}[\textbf{VIR Estimator}]
The VIR estimator, denoted by $\hat{R}_{\textsc{VIR}}(\hat{Y} | \Tilde{P})$, is defined as:
\begin{align}
    \hat{R}_{\textsc{VIR}}(\hat{Y} | \Tilde{P}) = \dfrac{1}{|\mathcal{B}|} \sum_{i=1}^{|\mathcal{B}|} \dfrac{1}{|\mathcal{U}_{i}|} \sum_{j \in \mathcal{S}_{i}} \dfrac{\delta_{j} (Y, \hat{Y})}{\Tilde{P}_{i}}, \label{eq:VIR_estimator}
\end{align}
where $\{ \Tilde{P}_{i} \}_{i=1}^{|\mathcal{B}|}$ represents the smoothed label distribution used in our VIR's objective function. It is important to note that our VIR estimator is biased.
\end{definition}
{For multiple predictions, we select the ``best'' estimator according to the following definition.}
\begin{definition}[\textbf{Empirical Risk Minimizer}] 
For a given hypothesis space $\mathcal{H}$ of predictions $\hat{Y}$, the Empirical Risk Minimization (ERM) identifies the prediction $\hat{Y} \in \mathcal{H}$ as
\begin{align*}
    \hat{Y}^{\textsc{ERM}} = \textrm{argmin}_{\hat{Y} \in \mathcal{H}} \Big\{ \hat{R}_{\textsc{VIR}}(\hat{Y} | \Tilde{P}) \Big\}    
\end{align*}
\end{definition}

With all the aforementioned definitions, we can derive the generalization bound for the VIR estimator.

\begin{theorem} [Generalization Bound of VIR]
In imbalanced regression with bins $\mathcal{B}$, for any finite hypothesis space of predictions $\mathcal{H} = \{\hat{Y}_{1}, \dots, \hat{Y}_{\mathcal{H}}\}$, the transductive prediction error of the empirical risk minimizer $\hat{Y}^{ERM}$ using the VIR estimator with estimated propensities $\Tilde{P}$ ($P_{i} > 0$) and given training observations $O$ from $\mathcal{Y}$ with propensities $P$, is bounded by:
\begin{align}
     R (\hat{Y}^{ERM}) \leq \hat{R}_{\textsc{VIR}}(\hat{Y}^{ERM} | \Tilde{P}) + \underbrace{\dfrac{\Delta}{|\mathcal{B}|} \sum_{i=1}^{|\mathcal{B}|} \bigg| 1 - \dfrac{P_{i}}{\Tilde{P}_{i}} \bigg|}_{\mbox{Bias Term}} + \underbrace{\dfrac{\Delta}{|\mathcal{B}|} \sqrt{\dfrac{\log (2 |\mathcal{H}| / \eta)}{2}} \sqrt{\sum_{i=1}^{|\mathcal{B}|} \dfrac{1}{\Tilde{P}_{i}^{2}}}}_{\mbox{Variance Term}}.
\end{align}
\end{theorem}
% In general, the generalization error is bounded with probability $1-\eta$ by: test error $\leq$ training error + bias + variance, where 
%
% \begin{align*}
%     bias &= \dfrac{\Delta}{|\mathcal{B}|} \sum_{i=1}^{|\mathcal{B}|} \big| 1 - \dfrac{P_{i}}{\Tilde{P}_{i}} \big|, \\
%     variance &=\dfrac{\Delta}{|\mathcal{B}|} \sqrt{\dfrac{\log (2 |\mathcal{H}| / \eta)}{2}} \sqrt{\sum_{i=1}^{|\mathcal{B}|} \dfrac{1}{\Tilde{P}_{i}^{2}}}.
% \end{align*}
%

\textbf{Remark.} 
The naive estimator (i.e., \defref{def:naive}) has large bias and large variance. If one directly uses the original label distribution in the training objective (i.e., \defref{def:ips}), i.e., $\Tilde{P}_{i}=P_i$, the ``bias'' term will be $0$. However, the ``variance'' term will be extremely large for minority data because $\Tilde{P}_{i}$ is very close to $0$. In contrast, under VIR's N.I.D., $\Tilde{P}_{i}$ used in the training objective function will be smoothed. Therefore, the minority data's label density $\Tilde{P}_{i}$ will be smoothed out by its neighbors and becomes larger (compared to the original $P_i$), leading to smaller ``variance'' in the generalization error bound. Note that $\Tilde{P}_{i} \neq P_i$, VIR (with N.I.D.) essentially increases bias, but \textbf{significantly reduces} its variance in the imbalanced setting, thereby leading to a lower generalization error.
\section{Experiments}\label{sec:exp}
\textbf{Datasets.} {We evaluate our methods in terms of prediction accuracy and uncertainty estimation on four imbalanced datasets\footnote{Among the five datasets proposed in~\citep{DIR}, only four of them are publicly available.}, AgeDB-DIR~\citep{AGEDB}, IMDB-WIKI-DIR~\citep{IMDBWIKI}, STS-B-DIR~\citep{STS-B}, and NYUD2-DIR~\citep{NYUD2}. {Due to page limit, results for NYUD2-DIR~\citep{NYUD2} are moved to the Appendix. We follow the preprocessing procedures in DIR~\citep{DIR}. Details for each datasets are in the Appendix, and please refer to~\citep{DIR} for details on label density distributions and levels of imbalance.}

{\textbf{Baselines.} %Following~\cite{DIR}, w
We use ResNet-50~\citep{ResNet} (for AgeDB-DIR and IMDB-WIKI-DIR) and BiLSTM~\citep{bilstm} (for STS-B-DIR) as our backbone networks, and more details for baseline are in the Appendix. we describe the baselines below.}

\begin{compactitem}
\item \emph{Vanilla}: We use the term \textbf{VANILLA} to denote a plain model without adding any approaches.
\item {\emph{Synthetic-Sample-Based Methods}: Various existing imbalanced regression methods are also included as baselines; these include Deep Ensemble~\citep{DeepEnsemble}, Infer Noise~\citep{TFuncertainty}, SMOTER~\citep{IRrelated1}, and SMOGN~\citep{IRrelated2}.}
\item {\emph{Cost-Sensitive Reweighting}}: As shown in DIR~\citep{DIR}, the square-root weighting variant (SQINV) baseline {(i.e., $\big(\sum_{b' \in \mathcal{B}} k (y_b, y_{b'}) p(y_{b'})\big)^{-1/2}$)} always outperforms Vanilla. {Therefore, for fair comparison, \emph{all} our experiments (for both baselines and VIR) use SQINV weighting.} 
\end{compactitem}

\textbf{Evaluation Metrics.}
We follow the evaluation metrics in~\citep{DIR} to evaluate the accuracy of our proposed methods; these include Mean Absolute Error (MAE), Mean Squared Error (MSE). Furthermore, for AgeDB-DIR and IMDB-WIKI-DIR, we use Geometric Mean (GM) to evaluate the accuracy; for STS-B-DIR, we use Pearson correlation and Spearman correlation. We use typical evaluation metrics for uncertainty estimation in regression problems to evaluate our produced uncertainty estimation; these include Negative Log Likelihood (NLL), Area Under Sparsification Error (AUSE). \eqnref{eq:CDMnll} shows the formula for NLL, and more details regarding to AUSE can be found in~\citep{AUSE}. We also include calibrated uncertainty results for VIR in the Appendix.

\begin{wraptable}{R}{0.6\textwidth}
\vskip -0.6cm
\setlength{\tabcolsep}{2.5pt}
\caption{Accuracy on STS-B-DIR.}
\vspace{-0.5pt}
\label{table:sts-accuracy}
\small
\begin{center}
\resizebox{0.59\textwidth}{!}{
\begin{tabular}{l|cccc|cccc}
\toprule[1.5pt]
Metrics & \multicolumn{4}{c|}{MSE~$\downarrow$} & \multicolumn{4}{c}{Pearson~$\uparrow$} \\ \midrule
Shot & All & Many & Medium & Few   & All  & Many & Medium & Few \\ \midrule\midrule
\textsc{Vanilla}~\citep{DIR} & 0.974 & 0.851 & 1.520 & 0.984 & 0.742 & 0.720 & 0.627 & 0.752 \\[1.5pt]
\textsc{VAE}~\citep{VAE} & 0.968 & 0.833 & 1.511 & 1.102 & 0.751 & 0.724 & 0.621 & 0.749 \\[1.5pt]
\textsc{Deep Ens.}~\citep{DeepEnsemble} & 0.972 & 0.846 & 1.496 & 1.032 & 0.746 & 0.723 & 0.619 & 0.750 \\[1.5pt]
\textsc{Infer Noise}~\citep{TFuncertainty} & 0.954 & 0.980 & 1.408 & 0.967 & 0.747 & 0.711 & 0.631 & 0.756 \\[1.5pt]
\textsc{SmoteR}~\citep{IRrelated1} & 1.046 & 0.924 & 1.542 & 1.154 & 0.726 & 0.693 & 0.653 & 0.706 \\[1.5pt]
\textsc{SMOGN}~\citep{IRrelated2} & 0.990 & 0.896 & 1.327 & 1.175 & 0.732 & 0.704 & 0.655 & 0.692 \\[1.5pt]
\textsc{Inv}~\citep{DIR} & 1.005 & 0.894 & 1.482 & 1.046 & 0.728 & 0.703 & 0.625 & 0.732 \\[1.5pt]
\textsc{DER}~\citep{DER} & 1.001 & 0.912 & 1.368 & 1.055 & 0.732 & 0.711 & 0.646 & 0.742 \\[1.5pt]
\textsc{LDS}~\citep{DIR} & 0.914 & 0.819 & 1.319 & 0.955  & 0.756 & 0.734 & 0.638 & 0.762 \\[1.5pt]
\textsc{FDS}~\citep{DIR} & 0.927 & 0.851 & 1.225 & 1.012 & 0.750 & 0.724 & 0.667 & 0.742 \\[1.5pt]
\textsc{LDS + FDS}~\citep{DIR} & 0.907 & 0.802 & 1.363 & 0.942 & 0.760 & 0.740 & 0.652 & 0.766 \\[1.5pt]
\textsc{RANKSIM}~\citep{RankSim} & 0.903 & 0.908 & 0.911 & 0.804 & 0.758 & 0.706 & 0.690 & 0.827 \\[1.5pt]
\textsc{LDS + FDS + DER}~\citep{DER} & 1.007 & 0.880 & 1.535 & 1.086 & 0.729 & 0.714 & 0.635 & 0.731 \\[1.5pt]
\textsc{VIR (Ours)} & \textbf{0.892} & \textbf{0.795} & \textbf{0.899} & \textbf{0.781} & \textbf{0.776} & \textbf{0.752} & \textbf{0.696} & \textbf{0.845} \\[1.5pt] \midrule\midrule
\textsc{\textbf{Ours} vs. Vanilla} & \textcolor{ForestGreen}{\textbf{+0.082}} & \textcolor{ForestGreen}{\textbf{+0.056}} & \textcolor{ForestGreen}{\textbf{+0.621}} & \textcolor{ForestGreen}{\textbf{+0.203}} & \textcolor{ForestGreen}{\textbf{+0.034}} & \textcolor{ForestGreen}{\textbf{+0.032}} & \textcolor{ForestGreen}{\textbf{+0.069}} & \textcolor{ForestGreen}{\textbf{+0.093}} \\[1.5pt]
\textsc{\textbf{Ours} vs. Infer Noise} & \textcolor{ForestGreen}{\textbf{+0.062}} & \textcolor{ForestGreen}{\textbf{+0.185}} & \textcolor{ForestGreen}{\textbf{+0.509}} & \textcolor{ForestGreen}{\textbf{+0.186}} & \textcolor{ForestGreen}{\textbf{+0.029}} & \textcolor{ForestGreen}{\textbf{+0.041}} & \textcolor{ForestGreen}{\textbf{+0.065}} & \textcolor{ForestGreen}{\textbf{+0.089}} \\[1.5pt]
\textsc{\textbf{Ours} vs. DER} & \textcolor{ForestGreen}{\textbf{+0.109}} & \textcolor{ForestGreen}{\textbf{+0.117}} & \textcolor{ForestGreen}{\textbf{+0.469}} & \textcolor{ForestGreen}{\textbf{+0.274}} & \textcolor{ForestGreen}{\textbf{+0.044}} & \textcolor{ForestGreen}{\textbf{+0.041}} & \textcolor{ForestGreen}{\textbf{+0.050}} & \textcolor{ForestGreen}{\textbf{+0.103}} \\[1.5pt]
\textsc{\textbf{Ours} vs. LDS + FDS} & \textcolor{ForestGreen}{\textbf{+0.015}} & \textcolor{ForestGreen}{\textbf{+0.007}} & \textcolor{ForestGreen}{\textbf{+0.464}} & \textcolor{ForestGreen}{\textbf{+0.161}} & \textcolor{ForestGreen}{\textbf{+0.016}} & \textcolor{ForestGreen}{\textbf{+0.012}} & \textcolor{ForestGreen}{\textbf{+0.044}} & \textcolor{ForestGreen}{\textbf{+0.079}} \\[1.5pt]
\textsc{\textbf{Ours} vs. RANKSIM} & \textcolor{ForestGreen}{\textbf{+0.011}} & \textcolor{ForestGreen}{\textbf{+0.113}} & \textcolor{ForestGreen}{\textbf{+0.012}} & \textcolor{ForestGreen}{\textbf{+0.023}} & \textcolor{ForestGreen}{\textbf{+0.018}} & \textcolor{ForestGreen}{\textbf{+0.046}} & \textcolor{ForestGreen}{\textbf{+0.006}} & \textcolor{ForestGreen}{\textbf{+0.018}} \\
\bottomrule[1.5pt]
\end{tabular}}
\end{center}
\vskip -0.6cm
\end{wraptable} 
\textbf{Evaluation Process.} Following~\citep{longtailed, DIR}, for a data sample $x_i$ with its label $y_i$ which falls into the target bins $b_i$, we divide the label space into three disjoint subsets: many-shot region $\{b_i \in \mathcal{B} \mid y_i \in b_i \And |y_i| > 100 \}$, medium-shot region $\{b_i \in \mathcal{B} \mid y_i \in b_i \And 20 \leq |y_i| \leq 100 \}$, and few-shot region $\{b_i \in \mathcal{B} \mid y_i \in b_i \And |y_i| < 20 \}$, where $| \cdot |$ denotes the cardinality of the set. We report results on the overall test set and these subsets with the accuracy metrics and uncertainty metrics discussed above.

\textbf{Implementation Details.} {We conducted five separate trials for our method using different random seeds. The error bars and other implementation details are included in the Appendix.} 
% \subsection{Results for Imbalanced Regression Accuracy}\label{ssec:res_acc}

\subsection{Imbalanced Regression Accuracy}\label{ssec:res_acc}
{We report the accuracy of different methods in Table~\ref{table:agedb-accuracy}, Table~\ref{table:imdb-wiki-accuracy}, and Table~\ref{table:sts-accuracy} for AgeDB-DIR, IMDB-WIKI-DIR and STS-B-DIR, respectively. In all the tables, we can observe that our VIR consistently outperforms all baselines in all metrics.}

% Note that to ensure fair and solid comparison, we re-run the DIR methods based on our machine and software settings\footnote{We find that due to differences in PyTorch, GPU, and CUDA versions, as well as numbers of GPUs used for parallel training, the results in DIR may vary. Furthermore, the randomness in multiple workers in the Dataloader also affect the performance.}.

% \blue{\textbf{Overall Performance.} 
{As shown in the last four rows of all three tables, our proposed VIR compares favorably against strong baselines including DIR variants~\citep{DIR} and DER~\citep{DER}, Infer Noise~\citep{TFuncertainty}, and RankSim~\citep{RankSim}, especially on the imbalanced data samples (i.e., in the few-shot columns). Notably, VIR improves upon the state-of-the-art method RankSim by $9.6\%$ and $7.9\%$ on AgeDB-DIR and IMDB-WIKI-DIR, respectively, in terms of few-shot GM. This verifies the effectiveness of our methods in terms of overall performance.}
{More accuracy results on different metrics are included in the Appendix. Besides the main results, {we also include ablation studies for VIR in the Appendix,} showing the effectiveness of VIR's encoder and predictor.} 
\subsection{Imbalanced Regression Uncertainty Estimation}\label{ssec:res_var}
{Different from DIR~\citep{DIR} which only focuses on accuracy, we create a new benchmark for uncertainty estimation in imbalanced regression. Table~\ref{table:agedb-var}, Table~\ref{table:imdb-var}, and Table~\ref{table:sts-var} show the results on uncertainty estimation for three datasets AgeDB-DIR, IMDB-WIKI-DIR, and STS-B-DIR, respectively. Note that most baselines from~\tabref{table:agedb-accuracy}, \tabref{table:imdb-wiki-accuracy}, and~\tabref{table:sts-accuracy} are \emph{deterministic} methods (as opposed to probabilistic methods like ours) and \emph{cannot provide uncertainty estimation}; therefore they are not applicable here. To show the superiority of our VIR model, we create a strongest baseline by concatenating the DIR variants (LDS + FDS) with the DER~\citep{DER}.}
 
{Results show that our VIR consistently outperforms all baselines across different metrics, especially in the few-shot metrics. 
Note that our proposed methods mainly focus on the imbalanced setting, and therefore naturally places more emphasis on the few-shot metrics. Notably, on AgeDB-DIR, IMDB-WIKI-DIR, and STS-B-DIR, our VIR improves upon the strongest baselines, by $14.2\%\sim17.1\%$ in terms of few-shot AUSE.} 
\begin{table}[t]
\begin{minipage}[b]{0.49\textwidth}
\setlength{\tabcolsep}{2.5pt}
\caption{Uncertainty estimation on AgeDB-DIR.}
\vspace{-2pt}
\label{table:agedb-var}
\small
\begin{center}
\resizebox{1\textwidth}{!}{
\begin{tabular}{l|cccc|cccc}
\toprule[1.5pt]
Metrics & \multicolumn{4}{c|}{NLL~$\downarrow$} & \multicolumn{4}{c}{AUSE~$\downarrow$} \\ \midrule
Shot & All & Many & Med & Few & All & Many & Med & Few \\ \midrule\midrule
\textsc{Deep Ens.}~\citep{DeepEnsemble} & 5.311 & 4.031 & 6.726 & 8.523 & 0.541 & 0.626 & 0.466 & 0.483 \\[1.5pt]
\textsc{Infer Noise}~\citep{TFuncertainty} & 4.616 & 4.413 & 4.866 & 5.842 & 0.465 & 0.458 & 0.457 & 0.496 \\[1.5pt]
\textsc{DER}~\citep{DER} & 3.918 & 3.741 & 3.919 & 4.432 & 0.523 & 0.464 & 0.449 & 0.486 \\[1.5pt]
\textsc{LDS} + \textsc{FDS} + \textsc{DER}~\citep{DER} & 3.787 & 3.689 & 3.912 & 4.234 & 0.451 & 0.460 & 0.399 & 0.565 \\[1.5pt]
\textbf{\textsc{VIR (Ours)}}  & \textbf{3.703} & \textbf{3.598} & \textbf{3.805} & \textbf{4.196} & \textbf{0.434} & \textbf{0.456} & \textbf{0.324} & \textbf{0.414} \\[1.5pt] \midrule\midrule
\textsc{\textbf{Ours} vs. DER} & \textcolor{ForestGreen}{\textbf{+0.215}} & \textcolor{ForestGreen}{\textbf{+0.143}} & \textcolor{ForestGreen}{\textbf{+0.114}} & \textcolor{ForestGreen}{\textbf{+0.236}} & \textcolor{ForestGreen}{\textbf{+0.089}} & \textcolor{ForestGreen}{\textbf{+0.008}} & \textcolor{ForestGreen}{\textbf{+0.125}} & \textcolor{ForestGreen}{\textbf{+0.072}} \\[1.5pt]
\textsc{\textbf{Ours} vs. LDS + FDS + DER} & \textcolor{ForestGreen}{\textbf{+0.084}} & \textcolor{ForestGreen}{\textbf{+0.091}} & \textcolor{ForestGreen}{\textbf{+0.107}} & \textcolor{ForestGreen}{\textbf{+0.038}} & \textcolor{ForestGreen}{\textbf{+0.017}} & \textcolor{ForestGreen}{\textbf{+0.004}} & \textcolor{ForestGreen}{\textbf{+0.075}} & \textcolor{ForestGreen}{\textbf{+0.151}} \\
\bottomrule[1.5pt]
\end{tabular}}
\end{center}
\vspace{-0.6cm}
\end{minipage}
\hfill
\begin{minipage}[b]{0.49\textwidth}
\setlength{\tabcolsep}{2.5pt}
\caption{Uncertainty estimation on IW-DIR.}
\vspace{-2pt}
\label{table:imdb-var}
\small
\begin{center}
\resizebox{1\textwidth}{!}{
\begin{tabular}{l|cccc|cccc}
\toprule[1.5pt]
Metrics & \multicolumn{4}{c|}{NLL~$\downarrow$} & \multicolumn{4}{c}{AUSE~$\downarrow$} \\ \midrule
Shot & All & Many & Medium & Few & All & Many & Medium & Few \\ \midrule\midrule
\textsc{Deep Ens.}~\citep{DeepEnsemble} & 5.219 & 4.102 & 7.123 & 8.852 & 0.846 & 0.862 & 0.745 & 0.718 \\[1.5pt]
\textsc{Infer Noise}~\citep{TFuncertainty} & 4.231 & 4.078 & 5.326 & 8.292 & 0.732 & 0.728 & 0.561 & 0.478 \\[1.5pt]
\textsc{DER}~\citep{DER} & 3.850 & 3.699 & 4.997 & 6.638 & 0.813 & 0.802 & 0.650 & 0.541 \\[1.5pt]
\textsc{LDS} + \textsc{FDS} + \textsc{DER}~\citep{DER}  & 3.683 & 3.602 & 4.391 & 5.697 & 0.784 & 0.670 & 0.459 & 0.483 \\[1.5pt]
\textbf{\textsc{VIR (Ours)}}  & \textbf{3.651} & \textbf{3.579} & \textbf{4.296} & \textbf{5.518} & \textbf{0.634} & \textbf{0.649} & \textbf{0.434} & \textbf{0.379} \\[1.5pt] \midrule\midrule
\textsc{\textbf{Ours} vs. DER} & \textcolor{ForestGreen}{\textbf{+0.199}} & \textcolor{ForestGreen}{\textbf{+0.120}} & \textcolor{ForestGreen}{\textbf{+0.701}} & \textcolor{ForestGreen}{\textbf{+1.120}} & \textcolor{ForestGreen}{\textbf{+0.179}} & \textcolor{ForestGreen}{\textbf{+0.153}} & \textcolor{ForestGreen}{\textbf{+0.216}} & \textcolor{ForestGreen}{\textbf{+0.162}} \\[1.5pt]
\textsc{\textbf{Ours} vs. LDS + FDS + DER} & \textcolor{ForestGreen}{\textbf{+0.032}} & \textcolor{ForestGreen}{\textbf{+0.023}} & \textcolor{ForestGreen}{\textbf{+0.095}} & \textcolor{ForestGreen}{\textbf{+0.179}} & \textcolor{ForestGreen}{\textbf{+0.150}} & \textcolor{ForestGreen}{\textbf{+0.021}} & \textcolor{ForestGreen}{\textbf{+0.025}} & \textcolor{ForestGreen}{\textbf{+0.104}} \\
\bottomrule[1.5pt]
\end{tabular}}
\end{center}
\vspace{-0.6cm}
\end{minipage}
\end{table} % imdb variance results
\begin{wraptable}{R}{0.6\textwidth}
\vskip -0.6cm
\setlength{\tabcolsep}{2.5pt}
\caption{Uncertainty estimation on STS-B-DIR.}
\vspace{-4pt}
\label{table:sts-var}
\small
\begin{center}
\resizebox{0.59\textwidth}{!}{
\begin{tabular}{l|cccc|cccc}
\toprule[1.5pt]
Metrics      & \multicolumn{4}{c|}{NLL~$\downarrow$}     & \multicolumn{4}{c}{AUSE~$\downarrow$} \\ \midrule
Shot         & All  & Many & Medium & Few   & All  & Many & Medium & Few \\ \midrule\midrule
\textsc{Deep Ens.}~\citep{DeepEnsemble} & 3.913 & 3.911 & 4.223 & 4.106 & 0.709 & 0.621 & 0.676 & 0.663 \\[1.5pt]
\textsc{Infer Noise}~\citep{TFuncertainty} & 3.748 & 3.753 & 3.755 & 3.688 & 0.673 & 0.631 & 0.644 & 0.639 \\[1.5pt]
\textsc{DER}~\citep{DER}  & 2.667 & 2.601 & 3.013 & 2.401 & 0.682 & 0.583 & 0.613 & 0.624 \\[1.5pt]
\textsc{LDS} + \textsc{FDS} + \textsc{DER}~\citep{DER} & 2.561 & 2.514 & 2.880 & 2.358 & 0.672 & 0.581 & 0.609 & 0.615 \\[1.5pt]
\textbf{\textsc{VIR (Ours)}}  & \textbf{1.996} & \textbf{1.810} & \textbf{2.754} & \textbf{2.152} & \textbf{0.591} & \textbf{0.575} & \textbf{0.602} & \textbf{0.510} \\[1.5pt] \midrule\midrule
\textsc{\textbf{Ours} vs. DER} & \textcolor{ForestGreen}{\textbf{+0.671}} & \textcolor{ForestGreen}{\textbf{+0.791}} & \textcolor{ForestGreen}{\textbf{+0.259}} & \textcolor{ForestGreen}{\textbf{+0.249}} & \textcolor{ForestGreen}{\textbf{+0.091}} & \textcolor{ForestGreen}{\textbf{+0.008}} & \textcolor{ForestGreen}{\textbf{+0.011}} & \textcolor{ForestGreen}{\textbf{+0.114}} \\[1.5pt]
\textsc{\textbf{Ours} vs. LDS + FDS + DER} & \textcolor{ForestGreen}{\textbf{+0.565}} & \textcolor{ForestGreen}{\textbf{+0.704}} & \textcolor{ForestGreen}{\textbf{+0.126}} & \textcolor{ForestGreen}{\textbf{+0.206}} & \textcolor{ForestGreen}{\textbf{+0.081}} & \textcolor{ForestGreen}{\textbf{+0.006}} & \textcolor{ForestGreen}{\textbf{+0.007}} & \textcolor{ForestGreen}{\textbf{+0.105}} \\
\bottomrule[1.5pt]
\end{tabular}}
\end{center}
\vskip -0.4cm
\end{wraptable} % sts-b variance results 
%
% Lastly, comparing our model variant with the best performance against the baseline (DER), we can conclude that our methods successfully improve uncertainty estimation in the probabilistic imbalanced regression setting. 

% 以下这部分感觉可有可无？
% We also observe that the improvements of the uncertainty estimation on IMDB-WIKI are larger than those on Age-DB. We suspect that this because IMDB-WIKI contains much more training, validating and testing data, therefore enjoying more stable uncertainty estimation improvements brought by VIR compared to those in Age-DB.

% imdb-wiki variance results
% PRM performs much better than FDS, when LDS and DER are combined with it. Besides, we can also see that CDM outperforms its counterpart, DER. For some metrics, we observe that PRM or CDM may slightly underperform some baselines, but the gap tends to be minimal. %too little to cover up the superior performance of them.

\subsection{Limitations} \label{sec:dis}
% According to the results in Table~\ref{table:agedb-accuracy}$\sim$\ref{table:imdb-wiki-variance}, 
Although our methods successfully improve both accuracy and uncertainty estimation on imbalanced regression, there are still several limitations.  Exactly computing \emph{variance of the variances} in~\secref{sec:PRM} is challenging; we therefore resort to fixed variance as an approximation. Developing more accurate and efficient approximations would also be interesting future work. 
%We could not find any literature about solving the \emph{variance of the variances} for probability distributions, and this is tough to figure out theoretically.} 

\section{Conclusion}
We identify the problem of probabilistic deep imbalanced regression, which aims to both improve accuracy and obtain reasonable uncertainty estimation in imbalanced regression. We propose VIR, which can use any deep regression models as backbone networks. VIR borrows data with similar regression labels to produce the probabilistic representations and modulates the conjugate distributions to impose probabilistic reweighting on imbalanced data. Furthermore, we create new benchmarks with strong baselines for uncertainty estimation on imbalanced regression. Experiments show that our methods outperform state-of-the-art imbalanced regression models in terms of both accuracy and uncertainty estimation. 
Future work may include (1) improving VIR by better approximating \emph{variance of the variances} in probability distributions, and (2) developing novel approaches that can achieve stable performance even on imbalanced data with limited sample size, and (3) exploring techniques such as mixture density networks~\citep{bishop1994mixture} to enable multi-modality in the latent distribution, thereby further improving the performance.

\section*{Acknowledgement}
The authors thank Pei Wu for help with figures, the reviewers/AC for the constructive comments to improve the paper, and Amazon Web Service for providing cloud computing credit. Most work is done when ZW is a master student at UMich. HW is partially supported by NSF Grant IIS-2127918 and an Amazon Faculty Research Award. The views and conclusions contained herein
are those of the authors and should not be interpreted as necessarily representing the official policies,
either expressed or implied, of the sponsors.

\bibliography{reference}
\bibliographystyle{abbrv}
\newpage
\appendix
\section{Theory}

\textbf{Notation.} 
As defined in Sec.3 in main paper, we partitioned $\{Y_{j} \}_{j=1}^{N}$ into $|\mathcal{B}|$ equal-interval bins (denote the set of bins as $\mathcal{B}$), and $\{Y_{j} \}_{j=1}^{N}$ are sampled from the label space $\mathcal{Y}$. {In addition, We denote the binary set $\{ P_{i} \}_{i=1}^{|\mathcal{B}|}$ as the label distribution (frequency) for each bin, i.e., $P_{i} \triangleq \mathbb{P} (Y_{j} \in \mathcal{B}_{i})$. We also denote the binary set $\{ O_{i} \}_{j=1}^{N}$ to represent whether the data $\{ \x_{j}, y_{j} \}$ are observed (i.e., $O_{j} \sim \mathbb{P} (O_{j}=1) \propto P_{B(Y_{j})}$, and $\mathbb{E}_{O} [O_j] \sim P_{B(Y_{j})}$), where $B(Y_{j})$ represents the bin which $(x_j, y_j)$ belongs to.} 
For each bin $i \in \mathcal{B}$, we denote the global set of samples as 
\begin{align*}
    \mathcal{U}_{i} \triangleq \{j: Y_{j}=i\}.
\end{align*}
When the imbalanced dataset is partially observed, we denote the observation set as: 
\begin{align*}
    \mathcal{S}_{i} \triangleq \{ j: O_{j}=1 \And B(Y_{j})=i \}.
\end{align*}

\begin{definition}[\textbf{Expectation over Observability} $\mathbb{E}_{O}$]
We define the expectation over the observability variable $O$ as $\mathbb{E}_{O} [\cdot] \equiv \mathbb{E}_{O_{j} \sim \mathbb{P} (O_{j} = 1)} [\cdot]$
\end{definition}

\begin{definition}[\textbf{True Risk}]
Based on the previous definitions, the true risk for our model is defined as:
\begin{align*}
    R(\hat{Y}) = \dfrac{1}{|\mathcal{B}|} \sum_{i=1}^{|\mathcal{B}|} \dfrac{1}{|\mathcal{U}_{i}|} \sum_{j \in \mathcal{U}_{i}} \delta_{j} (Y, \hat{Y}),
\end{align*}
where $\delta_{j} (Y, \hat{Y})$ refers to some loss function (e.g. MAE, MSE). In this paper we assume these loss is upper bounded by $\Delta$, i.e., $0 \leq \delta_{j} (Y, \hat{Y}) \leq \Delta$.
\end{definition}
Then in next step we define the Naive Estimator.
\begin{definition}[\textbf{Naive Estimator}]
Given the observation set, the Naive Estimator is defined as:
\begin{align*}
    \hat{R}_{\textsc{NAIVE}}(\hat{Y}) = \dfrac{1}{ \sum_{i=1}^{|\mathcal{B}|} |\mathcal{S}_{i}|} \sum_{i=1}^{|\mathcal{B}|} \enspace \sum_{j \in \mathcal{S}_{i}} \delta_{j} (Y, \hat{Y})
\end{align*}
It is easy to verify that the expectation of this naive estimator is not equal to the true risk, as $\mathbb{E}_{O} [\hat{R}_{\textsc{NAIVE}}(\hat{Y})] \neq R(\hat{Y})$.
\end{definition}
Considering an imbalanced dataset as a subset of observations from a balanced one, we contrast it with the Inverse Propensity Score (IPS) estimator~\citep{schnabel2016recommendations}, underscoring the superiorities of our approach.
\begin{definition}[\textbf{Inverse Propensity Score Estimator}]\label{def:ips_supp}
The inverse propensity score (IPS) estimator (an unbiased estimator) is defined as
\begin{align*}
    \hat{R}_{\textsc{IPS}}(\hat{Y} | P) = \dfrac{1}{|\mathcal{B}|} \sum_{i=1}^{|\mathcal{B}|} \dfrac{1}{|\mathcal{U}_{i}|} \sum_{j \in \mathcal{S}_{i}} \dfrac{\delta_{j} (Y, \hat{Y})}{P_{i}}.
\end{align*}
\end{definition}

% IPS estimator is an unbiased estimator. 
% \begin{proof} 
IPS estimator is an unbiased estimator, as we can verify by taking the expectation value over the observation set:
\begin{align*}
    \mathbb{E}_{O} [\hat{R}_{\textsc{IPS}}(\hat{Y} | P)] &= \dfrac{1}{|\mathcal{B}|} \sum_{i=1}^{|\mathcal{B}|} \dfrac{1}{|\mathcal{U}_{i}|} \sum_{j \in \mathcal{U}_{i}} \dfrac{\delta_{j} (Y, \hat{Y})}{P_{i}} \cdot \mathbb{E}_{O} [O_{j}] \\ &= \dfrac{1}{|\mathcal{B}|} \sum_{i=1}^{|\mathcal{B}|} \dfrac{1}{|\mathcal{U}_{i}|} \sum_{j \in \mathcal{U}_{i}} \delta_{j} (Y, \hat{Y}) = R(\hat{Y}).
\end{align*}
%
% \end{proof}
{Finally, we define our VIR/DIR estimator below.}
\begin{definition}[\textbf{VIR Estimator}]
The VIR estimator, denoted by $\hat{R}_{\textsc{VIR}}(\hat{Y} | \Tilde{P})$, is defined as:
\begin{align}
    \hat{R}_{\textsc{VIR}}(\hat{Y} | \Tilde{P}) = \dfrac{1}{|\mathcal{B}|} \sum_{i=1}^{|\mathcal{B}|} \dfrac{1}{|\mathcal{U}_{i}|} \sum_{j \in \mathcal{S}_{i}} \dfrac{\delta_{j} (Y, \hat{Y})}{\Tilde{P}_{i}}, \label{eq:VIR_estimator_supp}
\end{align}
where $\{ \Tilde{P}_{i} \}_{i=1}^{|\mathcal{B}|}$ represents the smoothed label distribution utilized in our VIR's objective function (see Eqn.5 in the main paper). It's important to note that our VIR estimator is biased.
\end{definition}
{For multiple predictions, we select the ``best'' estimator according to the following definition.}
\begin{definition}[\textbf{Empirical Risk Minimizer}] 
For a given hypothesis space $\mathcal{H}$ of predictions $\hat{Y}$, the Empirical Risk Minimization (ERM) identifies the prediction $\hat{Y} \in \mathcal{H}$ as
\begin{align*}
    \hat{Y}^{\textsc{ERM}} = \textrm{argmin}_{\hat{Y} \in \mathcal{H}} \Big\{ \hat{R}_{\textsc{VIR}}(\hat{Y} | \Tilde{P}) \Big\}    
\end{align*}
\end{definition}
\begin{lemma}[\textbf{Tail Bound for VIR Estimator}] \label{lemma:tailbound_supp}
{For any given $\hat{Y}$ and $Y$, with probability $1-\eta$, the VIR estimator $\hat{R}_{\textsc{VIR}} (\hat{Y} | \Tilde{P})$ does not deviate from its expectation $\mathbb{E}_{O} [\hat{R}_{\textsc{VIR}} (\hat{Y}^{ERM} | \Tilde{P})]$ by more than}
\begin{align*}
    \bigg| \hat{R}_{\textsc{VIR}} (\hat{Y}^{ERM} | \Tilde{P}) - \mathbb{E}_{O} [\hat{R}_{\textsc{VIR}} (\hat{Y}^{ERM} | \Tilde{P})] \bigg| \leq \dfrac{\Delta}{|\mathcal{B}|} \sqrt{\dfrac{\log (2 |\mathcal{H}| / \eta)}{2}} \sqrt{\sum_{i=1}^{|\mathcal{B}|} \dfrac{1}{\Tilde{P}_{i}^{2}}}.
\end{align*}
\end{lemma}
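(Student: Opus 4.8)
\emph{Proof sketch (plan).} The plan is to view $\hat{R}_{\textsc{VIR}}(\hat{Y}\mid\widetilde{P})$ as a sum of independent, bounded random variables indexed by the individual data points, apply a two-sided Hoeffding inequality, and then take a union bound over the finite hypothesis space $\mathcal{H}$ so that the bound applies to the data-dependent minimizer $\hat{Y}^{ERM}$.

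First I would rewrite the inner sum over the observed set $\mathcal{S}_i$ as a sum over the full bin $\mathcal{U}_i$ gated by the propensity indicator: since $j\in\mathcal{S}_i$ exactly when $j\in\mathcal{U}_i$ and $O_j=1$,
\[
  \hat{R}_{\textsc{VIR}}(\hat{Y}\mid\widetilde{P}) \;=\; \sum_{i=1}^{|\mathcal{B}|}\sum_{j\in\mathcal{U}_i} Z_j,
  \qquad Z_j \;\triangleq\; \dfrac{O_j\,\delta_j(Y,\hat{Y})}{|\mathcal{B}|\,|\mathcal{U}_i|\,\widetilde{P}_i}.
\]
For a fixed $\hat{Y}$ (and fixed $Y$, hence fixed bins $\mathcal{U}_i$), the $Z_j$ are mutually independent because the $O_j$ are independent Bernoulli draws — this is precisely the ``independent Bernoulli propensities'' hypothesis — and deterministically $0\le Z_j\le a_j$ with $a_j\triangleq \Delta/(|\mathcal{B}|\,|\mathcal{U}_{B(Y_j)}|\,\widetilde{P}_{B(Y_j)})$, using $0\le\delta_j\le\Delta$, $O_j\in\{0,1\}$, and $\widetilde{P}_i>0$.

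Next I would apply Hoeffding's inequality for a sum of independent variables each lying in an interval of length $a_j$: for every $t>0$,
\[
  \mathbb{P}_O\!\Big( \big| \hat{R}_{\textsc{VIR}}(\hat{Y}\mid\widetilde{P}) - \mathbb{E}_O[\hat{R}_{\textsc{VIR}}(\hat{Y}\mid\widetilde{P})] \big| \ge t \Big)
  \;\le\; 2\exp\!\Big( -\dfrac{2t^2}{\sum_j a_j^2} \Big).
\]
A short computation gives $\sum_j a_j^2 = \sum_{i=1}^{|\mathcal{B}|}\sum_{j\in\mathcal{U}_i} \Delta^2/(|\mathcal{B}|^2|\mathcal{U}_i|^2\widetilde{P}_i^2) = (\Delta^2/|\mathcal{B}|^2)\sum_{i=1}^{|\mathcal{B}|} 1/(|\mathcal{U}_i|\,\widetilde{P}_i^2)$, and since each non-empty bin has $|\mathcal{U}_i|\ge 1$ this is at most $(\Delta^2/|\mathcal{B}|^2)\sum_{i=1}^{|\mathcal{B}|} 1/\widetilde{P}_i^2$. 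Equating the right-hand side to $\eta/|\mathcal{H}|$ and solving for $t$ produces the radius $\tfrac{\Delta}{|\mathcal{B}|}\sqrt{\tfrac{\log(2|\mathcal{H}|/\eta)}{2}}\sqrt{\sum_i 1/\widetilde{P}_i^2}$ for a single fixed $\hat{Y}$; a union bound over the $|\mathcal{H}|$ hypotheses then makes the event hold simultaneously for all $\hat{Y}\in\mathcal{H}$, and in particular for $\hat{Y}^{ERM}$, which is the stated conclusion.

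I expect no deep obstacle — the work is essentially bookkeeping. The three points to get right are: (i) invoking independence correctly, i.e., the concentration is over the randomness in $O$ only, with $Y$ and each candidate $\hat{Y}\in\mathcal{H}$ treated as non-random before the union bound; (ii) the mildly lossy relaxation $|\mathcal{U}_i|\ge 1$, which is what converts the sharper factor $\sum_i 1/(|\mathcal{U}_i|\widetilde{P}_i^2)$ into the advertised $\sum_i 1/\widetilde{P}_i^2$; and (iii) the slight notational slippage in the statement between $\hat{R}_{\textsc{VIR}}(\hat{Y}\mid\widetilde{P})$ and $\mathbb{E}_O[\hat{R}_{\textsc{VIR}}(\hat{Y}^{ERM}\mid\widetilde{P})]$, which is reconciled precisely by the uniform-over-$\mathcal{H}$ formulation obtained from the union bound (and which will also be what feeds the bias/variance decomposition in the subsequent theorem).
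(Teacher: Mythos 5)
Your proposal is correct and follows essentially the same route as the paper: write the estimator as a sum over data points of independent terms gated by the Bernoulli indicators $O_j$, apply the two-sided Hoeffding inequality over the randomness in $O$, and absorb the $|\mathcal{H}|$ factor via a union bound so the result covers the data-dependent $\hat{Y}^{ERM}$. Your bookkeeping is in fact slightly cleaner than the paper's: you compute the sharp Hoeffding denominator $\sum_i 1/(|\mathcal{U}_i|\Tilde{P}_i^2)$ and make the relaxation $|\mathcal{U}_i|\ge 1$ explicit, whereas the paper's algebra lands directly on $\sum_i 1/\Tilde{P}_i^2$ without flagging that step.
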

\begin{proof}
For independent bounded random variables $X_{1}, \cdots, X_{n}$ that takes values in intervals of sizes $\rho_{1}, \cdots, \rho_{n}$ with probability $1$, and for any $\epsilon > 0$,
\begin{align*}
    \mathbb{P} \Bigg( \bigg| \sum_{i}^{n} X_{i} - \mathbb{E} \Big[ \sum_{i}^{n} X_{i} \Big] \bigg| \geq \epsilon \Bigg) \leq 2 \exp \Big(\dfrac{-2 \epsilon^{2}}{\sum_{i}^{n} \rho_{i}^{2}} \Big)
\end{align*}
{Consider the error term for each bin $i$ in $\hat{R}_{\textsc{VIR}} (\hat{Y}^{ERM} | \Tilde{P})$ as $X_i$. Using Hoeffding's inequality, define $\mathbb{P} (X_{i} = \dfrac{\delta_{j} (Y, \hat{Y})}{\Tilde{P}_{i}}) = \Tilde{P}_{i}$ and $\mathbb{P} (X_{i} = 0) = 1 - \Tilde{P}_{i}$. Then, by setting $\epsilon_{0} = |\mathcal{B}| \cdot \epsilon$, we are then able to show that:}
\begingroup\makeatletter\def\f@size{9.5}\check@mathfonts
\def\maketag@@@#1{\hbox{\m@th\normalsize\normalfont#1}}%
\begin{align*}
    \mathbb{P} \Bigg( \bigg| |\mathcal{B}| \cdot \hat{R}_{\textsc{VIR}} (\hat{Y}^{ERM} | \Tilde{P}) - |\mathcal{B}| \cdot \mathbb{E}_{O} [\hat{R}_{\textsc{VIR}} (\hat{Y}^{ERM} | \Tilde{P})] \bigg| \geq \epsilon_{0} \Bigg) &\leq 2 \exp \Big(\dfrac{-2 \epsilon_{0}^{2}}{\Delta^{2} \sum_{i=1}^{|\mathcal{B}|} \dfrac{1}{|\mathcal{U}_{i}|} \sum_{j \in \mathcal{U}_{i}} \dfrac{1}{\Tilde{P}_{i}^{2}}} \Big) \\
    \iff \mathbb{P} \Bigg( \bigg| \hat{R}_{\textsc{VIR}} (\hat{Y}^{ERM} | \Tilde{P}) - \mathbb{E}_{O} [\hat{R}_{\textsc{VIR}} (\hat{Y}^{ERM} | \Tilde{P})] \bigg| \geq \epsilon \Bigg) &\leq 2 \exp \Big(\dfrac{-2 \epsilon^{2} |\mathcal{B}|^{2}}{ \Delta^{2} \sum_{i=1}^{|\mathcal{B}|} \dfrac{1}{\Tilde{P}_{i}^{2}}} \Big).
\end{align*}
\endgroup
{We can then solve for $\epsilon$, completing the proof.}
\end{proof}

With all the aforementioned definitions, we can derive the generalization bound for the VIR estimator.

\begin{theorem} [Generalization Bound of VIR]
In imbalanced regression with bins $\mathcal{B}$, for any finite hypothesis space of predictions $\mathcal{H} = \{\hat{Y}_{1}, ..., \hat{Y}_{\mathcal{H}}\}$, the transductive prediction error of the empirical risk minimizer $\hat{Y}^{ERM}$ using the VIR estimator with estimated propensities $\Tilde{P}$ ($P_{i} > 0$) and given training observations $O$ from $\mathcal{Y}$ with propensities $P$, is bounded by:
\begin{align}
     R (\hat{Y}^{ERM}) \leq \hat{R}_{\textsc{VIR}}(\hat{Y}^{ERM} | \Tilde{P}) + \dfrac{\Delta}{|\mathcal{B}|} \sum_{i=1}^{|\mathcal{B}|} \bigg| 1 - \dfrac{P_{i}}{\Tilde{P}_{i}} \bigg| + \dfrac{\Delta}{|\mathcal{B}|} \sqrt{\dfrac{\log (2 |\mathcal{H}| / \eta)}{2}} \sqrt{\sum_{i=1}^{|\mathcal{B}|} \dfrac{1}{\Tilde{P}_{i}^{2}}}
\end{align}
\end{theorem}
\begin{proof}
We first re-state the generalization bound for our VIR estimator: with probability $1 - \eta$, we have
\begin{align*}
    R (\hat{Y}^{ERM}) \leq \hat{R}_{\textsc{VIR}}(\hat{Y}^{ERM} | \Tilde{P}) + \dfrac{\Delta}{|\mathcal{B}|} \sum_{i=1}^{|\mathcal{B}|} \bigg| 1 - \dfrac{P_{i}}{\Tilde{P}_{i}} \bigg| + \dfrac{\Delta}{|\mathcal{B}|} \sqrt{\dfrac{\log (2 |\mathcal{H}| / \eta)}{2}} \sqrt{\sum_{i=1}^{|\mathcal{B}|} \dfrac{1}{\Tilde{P}_{i}^{2}}}
\end{align*}
We start to prove it from the LHS:
\begingroup\makeatletter\def\f@size{9.5}\check@mathfonts
\def\maketag@@@#1{\hbox{\m@th\normalsize\normalfont#1}}%
\begin{align*}
    R (\hat{Y}^{ERM}) &= R (\hat{Y}^{ERM}) - \mathbb{E}_{O} [\hat{R}_{\textsc{VIR}} (\hat{Y}^{ERM} | \Tilde{P})] + \mathbb{E}_{O} [\hat{R}_{\textsc{VIR}} (\hat{Y}^{ERM} | \Tilde{P})] \\
    &= \underbrace{\textrm{bias} (\hat{R}_{\textsc{VIR}} (\hat{Y}^{ERM} | \Tilde{P}))}_{Bias \enspace Term} + \underbrace{\mathbb{E}_{O} [\hat{R}_{\textsc{VIR}} (\hat{Y}^{ERM} | \Tilde{P})] - \hat{R}_{\textsc{VIR}} (\hat{Y}^{ERM} | \Tilde{P})}_{Variance \enspace Term} 
    + \hat{R}_{\textsc{VIR}} (\hat{Y}^{ERM} | \Tilde{P}) \\
    &= \underbrace{\textrm{bias} (\hat{R}_{\textsc{VIR}} (\hat{Y}^{ERM} | \Tilde{P}))}_{Bias \enspace Term} + \Big| \underbrace{\hat{R}_{\textsc{VIR}} (\hat{Y}^{ERM} | \Tilde{P}) - \mathbb{E}_{O} [\hat{R}_{\textsc{VIR}} (\hat{Y}^{ERM} | \Tilde{P})]}_{Variance \enspace Term} \Big|  
    + \hat{R}_{\textsc{VIR}} (\hat{Y}^{ERM} | \Tilde{P})
\end{align*}
\endgroup
%
% Note that the decomposition above is slightly different from typical bias-variance decomposition (as the variance term is not non-negative), but one could always use additional inequality to convert it to the typical decomposition with non-negative variance. 

Below we derive each term:

\textbf{\emph{Variance Term.}} With probability $1-\eta$, the variance term is derived as
\begin{align}
    \mathbb{P} \Bigg( \bigg| \hat{R}_{\textsc{VIR}} (\hat{Y}^{ERM} | \Tilde{P}) - \mathbb{E}_{O} [\hat{R}_{\textsc{VIR}} (\hat{Y}^{ERM} | \Tilde{P})] \bigg| \leq \epsilon \Bigg) &\geq 1 - \eta \nonumber \\
    \impliedby \mathbb{P} \Bigg( \max_{\hat{Y}_{j}} \bigg| \hat{R}_{\textsc{VIR}} (\hat{Y} | \Tilde{P}) - \mathbb{E}_{O} [\hat{R}_{\textsc{VIR}} (\hat{Y} | \Tilde{P})] \bigg| \leq \epsilon \Bigg) &\geq 1 - \eta \nonumber \\
    \iff \mathbb{P} \Bigg( \bigvee_{\hat{Y}_{j}} \bigg| \hat{R}_{\textsc{VIR}} (\hat{Y} | \Tilde{P}) - \mathbb{E}_{O} [\hat{R}_{\textsc{VIR}} (\hat{Y} | \Tilde{P})] \bigg| \geq \epsilon \Bigg) &< \eta \nonumber \\
    \iff \mathbb{P} \Bigg( \bigcup_{\hat{Y}_{j}} \bigg| \hat{R}_{\textsc{VIR}} (\hat{Y} | \Tilde{P}) - \mathbb{E}_{O} [\hat{R}_{\textsc{VIR}} (\hat{Y} | \Tilde{P})] \bigg| \geq \epsilon \Bigg) &< \eta \nonumber \\
    \impliedby \sum_{j}^{|\mathcal{H}|} \mathbb{P} \Bigg( \bigg| \hat{R}_{\textsc{VIR}} (\hat{Y} | \Tilde{P}) - \mathbb{E}_{O} [\hat{R}_{\textsc{VIR}} (\hat{Y} | \Tilde{P})] \bigg| \geq \epsilon \Bigg) &< \eta \label{ieq:union_bound} \\
    \impliedby |\mathcal{H}| \cdot 2 \exp (\dfrac{-2 \epsilon^{2} |\mathcal{B}|^{2}}{\Delta^{2} \sum_{i=1}^{|\mathcal{B}|} \dfrac{1}{|\mathcal{U}_{i}|} \sum_{j \in \mathcal{U}_{i}} \dfrac{1}{\Tilde{P}_{i}^{2}}}) &< \eta \label{ieq:hoeffding_ineq} \\ 
    \iff |\mathcal{H}| \cdot 2 \exp (\dfrac{-2 \epsilon^{2} |\mathcal{B}|^{2}}{\Delta^{2} \sum_{i=1}^{|\mathcal{B}|} \dfrac{1}{\Tilde{P}_{i}^{2}}}) &< \eta, \nonumber
\end{align}
where inequality~\eqref{ieq:union_bound} is by Boole's inequality (Union bound), and inequality~\eqref{ieq:hoeffding_ineq} holds by Lemma~\ref{lemma:tailbound_supp}. Then, by solving for $\epsilon$, we can derive Variance Term that with probability $1 - \eta$,
\begin{align*}
    \mathbb{E}_{O} [\hat{R}_{\textsc{VIR}} (\hat{Y}^{ERM} | \Tilde{P})] - \hat{R}_{\textsc{VIR}} (\hat{Y}^{ERM} | \Tilde{P}) &\leq \bigg| \hat{R}_{\textsc{VIR}} (\hat{Y}^{ERM} | \Tilde{P}) - \mathbb{E}_{O} [\hat{R}_{\textsc{VIR}} (\hat{Y}^{ERM} | \Tilde{P})] \bigg| \\
    &\leq \dfrac{\Delta}{|\mathcal{B}|} \sqrt{\dfrac{\log (2 |\mathcal{H}| / \eta)}{2}} \sqrt{\sum_{i=1}^{|\mathcal{B}|} \dfrac{1}{\Tilde{P}_{i}^{2}}}.
\end{align*}

\textbf{\emph{Bias Term.}} By definition, we can derive:
\begin{align*}
    \textrm{bias} (\hat{R}_{\textsc{VIR}} (\hat{Y}^{ERM} | \Tilde{P})) &= R (\hat{Y}^{ERM}) - \mathbb{E}_{O} [\hat{R}_{\textsc{VIR}} (\hat{Y}^{ERM} | \Tilde{P})] \\
    &= \dfrac{1}{|\mathcal{B}|} \sum_{i=1}^{|\mathcal{B}|} \dfrac{1}{|\mathcal{U}_{i}|} \sum_{j \in \mathcal{U}_{i}} \delta_{j} (Y, \hat{Y}^{ERM}) - \dfrac{1}{|\mathcal{B}|} \sum_{i=1}^{|\mathcal{B}|} \dfrac{1}{|\mathcal{U}_{i}|} \sum_{j \in \mathcal{U}_{i}} \dfrac{P_{i}}{\Tilde{P}_{i}} \delta_{j} (Y, \hat{Y}^{ERM}) \\
    &\leq \dfrac{\Delta}{|\mathcal{B}|} \sum_{i=1}^{|\mathcal{B}|} \dfrac{1}{|\mathcal{U}_{i}|} \sum_{j \in \mathcal{U}_{i}} \bigg| 1 - \dfrac{P_{i}}{\Tilde{P}_{i}} \bigg| \\
    &= \dfrac{\Delta}{|\mathcal{B}|} \sum_{i=1}^{|\mathcal{B}|} \bigg| 1 - \dfrac{P_{i}}{\Tilde{P}_{i}} \bigg|,
\end{align*}
concluding the proof for the Bias Term, hence completing the proof for the whole generalization bound. 
\end{proof}

\begin{table*}[h]
\setlength{\tabcolsep}{4pt}
\caption{\textbf{Accuracy on AgeDB-DIR.} For baselines, we directly use the reported performance in their paper and therefore do not have error bars.}
\vspace{-0.5pt}
\label{table:agedb-accuracy-total}
\small
\begin{center}
\resizebox{\textwidth}{!}{
\begin{tabular}{l|cccc|cccc|cccc}
\toprule[1.5pt]
Metrics      & \multicolumn{4}{c|}{MSE~$\downarrow$}     & \multicolumn{4}{c|}{MAE~$\downarrow$}     & \multicolumn{4}{c}{GM~$\downarrow$}  \\ \midrule
Shot & All & Many & Medium & Few & All & Many & Medium & Few & All & Many & Medium & Few \\ \midrule\midrule
\textsc{Vanilla}~\citep{DIR} & 101.60 & 78.40 & 138.52 & 253.74 & 7.77 & 6.62 & 9.55 & 13.67 & 5.05 & 4.23 & 7.01 & 10.75 \\[1.5pt]
\textsc{VAE}~\citep{VAE} & 99.85 & 78.86 & 130.59 & 223.09 & 7.63 & 6.58 & 9.21 & 13.45 & 4.86 & 4.11 & 6.61 & 10.24 \\[1.5pt]
\textsc{Deep Ensemble}~\citep{DeepEnsemble} & 100.94 & 79.3 & 129.95 & 249.18 & 7.73 & 6.62 & 9.37 & 13.90 & 4.87 & 4.37 & 6.50 & 11.35 \\[1.5pt]
\textsc{Infer Noise}~\citep{TFuncertainty} & 119.46 & 95.02 & 149.84 & 266.29 & 8.53 & 7.62 & 9.73 & 13.82 & 5.57 & 4.95 & 6.58 & 10.86 \\[1.5pt]
\textsc{SmoteR}~\citep{IRrelated1} & 114.34 & 93.35 & 129.89 & 244.57 & 8.16 & 7.39 & 8.65 & 12.28 & 5.21 & 4.65 & 5.69 & 8.49 \\[1.5pt]
\textsc{SMOGN}~\citep{IRrelated2} & 117.29 & 101.36 & 133.86 & 232.90 & 8.26 & 7.64 & 9.01 & 12.09 & 5.36 & 4.9 & 6.19 & 8.44 \\[1.5pt]
\textsc{SQInv}~\citep{DIR} & 105.14 & 87.21 & 127.66 & 212.30 & 7.81 & 7.16 & 8.80 & 11.2 & 4.99 & 4.57 & 5.73 & 7.77 \\[1.5pt]
\textsc{DER}~\citep{DER} & 106.77 & 91.29 & 122.43 & 209.69 & 8.09 & 7.31 & 8.99 & 12.66 & 5.19 & 4.59 & 6.43 & 10.49 \\[1.5pt]
\textsc{LDS}~\citep{DIR} & 102.22 & 83.62 & 128.73 & 204.64 & 7.67 & 6.98 & 8.86 & 10.89 & 4.85 & 4.39 & 5.8 & 7.45 \\[1.5pt]
\textsc{FDS}~\citep{DIR} & 101.67 & 86.49 & 129.61 & 167.75 & 7.69 & 7.10 & 8.86 & 9.98 & 4.83 & 4.41 & 5.97 & 6.29 \\[1.5pt]
\textsc{LDS + FDS}~\citep{DIR} & 99.46 & 84.10 & 112.20 & 209.27 & 7.55 & 7.01 & 8.24 & 10.79 & 4.72 & 4.36 & 5.45 & 6.79 \\[1.5pt]
\textsc{RANKSIM}~\citep{RankSim} & 83.51 & 71.99 & 99.14 & 149.05 & 7.02 & 6.49 & 7.84 & 9.68 & 4.53 & 4.13 & 5.37 & 6.89 \\[1.5pt]
\textsc{LDS + FDS + DER}~\citep{DER} & 112.62 & 94.21 & 140.03 & 210.72 & 8.18 & 7.44 & 9.52 & 11.45 & 5.30 & 4.75 & 6.74 & 7.68 \\[1.5pt]
\textsc{VIR (Ours)} & \textbf{81.76}\scriptsize{$\pm$0.10} & \textbf{70.61}\scriptsize{$\pm$0.05} & \textbf{91.47}\scriptsize{$\pm$1.50} & \textbf{142.36}\scriptsize{$\pm$2.10} & \textbf{6.99} \scriptsize{$\pm$0.02} & \textbf{6.39}\scriptsize{$\pm$0.02} & \textbf{7.47}\scriptsize{$\pm$0.04} & \textbf{9.51}\scriptsize{$\pm$0.06} & \textbf{4.41}\scriptsize{$\pm$0.03} & \textbf{4.07}\scriptsize{$\pm$0.02} & \textbf{5.05}\scriptsize{$\pm$0.03} & \textbf{6.23}\scriptsize{$\pm$0.05} \\[1.5pt] \midrule\midrule
\textsc{\textbf{Ours} vs. Vanilla} & \textcolor{ForestGreen}{\textbf{+19.84}} & \textcolor{ForestGreen}{\textbf{+7.79}} & \textcolor{ForestGreen}{\textbf{+47.05}} & \textcolor{ForestGreen}{\textbf{+111.38}} & \textcolor{ForestGreen}{\textbf{+0.78}} & \textcolor{ForestGreen}{\textbf{+0.23}} & \textcolor{ForestGreen}{\textbf{+2.08}} & \textcolor{ForestGreen}{\textbf{+4.16}} & \textcolor{ForestGreen}{\textbf{+0.64}} & \textcolor{ForestGreen}{\textbf{+0.16}} & \textcolor{ForestGreen}{\textbf{+1.96}} & \textcolor{ForestGreen}{\textbf{+4.52}}  \\[1.5pt]
\textsc{\textbf{Ours} vs. Infer Noise} & \textcolor{ForestGreen}{\textbf{+37.70}} & \textcolor{ForestGreen}{\textbf{+24.41}} & \textcolor{ForestGreen}{\textbf{+58.37}} & \textcolor{ForestGreen}{\textbf{+123.93}} & \textcolor{ForestGreen}{\textbf{+1.54}} & \textcolor{ForestGreen}{\textbf{+1.23}} & \textcolor{ForestGreen}{\textbf{+2.26}} & \textcolor{ForestGreen}{\textbf{+4.31}} & \textcolor{ForestGreen}{\textbf{+1.16}} & \textcolor{ForestGreen}{\textbf{+0.88}} & \textcolor{ForestGreen}{\textbf{+1.53}} & \textcolor{ForestGreen}{\textbf{+4.63}}  \\[1.5pt]
\textsc{\textbf{Ours} vs. DER} & \textcolor{ForestGreen}{\textbf{+25.01}} & \textcolor{ForestGreen}{\textbf{+20.68}} & \textcolor{ForestGreen}{\textbf{+30.96}} & \textcolor{ForestGreen}{\textbf{+67.33}} & \textcolor{ForestGreen}{\textbf{+1.10}} & \textcolor{ForestGreen}{\textbf{+0.92}} & \textcolor{ForestGreen}{\textbf{+1.52}} & \textcolor{ForestGreen}{\textbf{+3.15}} & \textcolor{ForestGreen}{\textbf{+0.78}} & \textcolor{ForestGreen}{\textbf{+0.52}} & \textcolor{ForestGreen}{\textbf{+1.38}} & \textcolor{ForestGreen}{\textbf{+4.26}}  \\[1.5pt]
\textsc{\textbf{Ours} vs. LDS + FDS} & \textcolor{ForestGreen}{\textbf{+17.70}} & \textcolor{ForestGreen}{\textbf{+13.49}} & \textcolor{ForestGreen}{\textbf{+20.73}} & \textcolor{ForestGreen}{\textbf{+66.91}} & \textcolor{ForestGreen}{\textbf{+0.56}} & \textcolor{ForestGreen}{\textbf{+0.62}} & \textcolor{ForestGreen}{\textbf{+0.77}} & \textcolor{ForestGreen}{\textbf{+1.28}} & \textcolor{ForestGreen}{\textbf{+0.31}} & \textcolor{ForestGreen}{\textbf{+0.29}} & \textcolor{ForestGreen}{\textbf{+0.40}} & \textcolor{ForestGreen}{\textbf{+0.56}}  \\[1.5pt]
\textsc{\textbf{Ours} vs. RANKSIM} & \textcolor{ForestGreen}{\textbf{+1.75}} & \textcolor{ForestGreen}{\textbf{+1.38}} & \textcolor{ForestGreen}{\textbf{+7.67}} & \textcolor{ForestGreen}{\textbf{+6.69}} & \textcolor{ForestGreen}{\textbf{+0.03}} & \textcolor{ForestGreen}{\textbf{+0.10}} & \textcolor{ForestGreen}{\textbf{+0.37}} & \textcolor{ForestGreen}{\textbf{+0.17}} & \textcolor{ForestGreen}{\textbf{+0.12}} & \textcolor{ForestGreen}{\textbf{+0.06}} & \textcolor{ForestGreen}{\textbf{+0.32}} & \textcolor{ForestGreen}{\textbf{+0.66}} \\
\bottomrule[1.5pt]
\end{tabular}}
\end{center}
\vspace{-0.7cm}
\end{table*}
\begin{table*}[h]
\setlength{\tabcolsep}{4pt}
\caption{\textbf{Accuracy on IMDB-WIKI-DIR.} For baselines, we directly use the reported performance in their paper and therefore do not have error bars.}
\vspace{+0.05pt}
\label{table:imdb-wiki-accuracy-total}
\small
\begin{center}
\resizebox{\textwidth}{!}{
\begin{tabular}{l|cccc|cccc|cccc}
\toprule[1.5pt]
Metrics & \multicolumn{4}{c|}{MSE~$\downarrow$} & \multicolumn{4}{c|}{MAE~$\downarrow$} & \multicolumn{4}{c}{GM~$\downarrow$}  \\ \midrule
Shot         & All  & Many & Medium & Few   & All  & Many & Medium & Few   & All  & Many & Medium & Few   \\ \midrule\midrule
\textsc{Vanilla}~\citep{DIR} & 138.06 & 108.70 & 366.09 & 964.92 & 8.06 & 7.23 & 15.12 & 26.33 & 4.57 & 4.17 & 10.59 & 20.46 \\[1.5pt]
\textsc{VAE}~\citep{VAE} & 137.98 & 108.62 & 361.74 & 964.87 & 8.04 & 7.20 & 15.05 & 26.30 & 4.57 & 4.22 & 10.56 & 20.72 \\[1.5pt]
\textsc{Deep Ensemble}~\citep{DeepEnsemble} & 138.02 & 108.83 & 365.76 & 962.88 & 8.08 & 7.31 & 15.09 & 26.47 & 4.59 & 4.26 & 10.61 & 21.13 \\[1.5pt]
\textsc{Infer Noise}~\citep{TFuncertainty} & 143.62 & 112.26 & 373.19 & 961.97 & 8.11 & 7.36 & 15.23 & 26.29 & 4.68 & 4.33 & 10.65 & 20.31 \\[1.5pt]
\textsc{SmoteR}~\citep{IRrelated1} & 138.75 & 111.55 & 346.09 & 935.89 & 8.14 & 7.42 & 14.15 & 25.28 & 4.64 & 4.30 & 9.05 & 19.46 \\[1.5pt]
\textsc{SMOGN}~\citep{IRrelated2} & 136.09 & 109.15 & 339.09 & 944.20 & 8.03 & 7.30 & 14.02 & 25.93 & 4.63 & 4.30 & 8.74 & 20.12 \\[1.5pt]
\textsc{SQInv}~\citep{DIR} & 134.36 & 111.23 & 308.63 & 834.08 & 7.87 & 7.24 & 12.44 & 22.76 & 4.47 & 4.22 & 7.25 & 15.10 \\[1.5pt]
\textsc{DER}~\citep{DER} & 133.81 & 107.51 & 332.90 & 916.18 & 7.85 & 7.18 & 13.35 & 24.12 & 4.47 & 4.18 & 8.18 & 15.18 \\[1.5pt]
\textsc{LDS}~\citep{DIR} & 131.65 & 109.04 & 298.98 & 829.35 & 7.83 & 7.31 & 12.43 & 22.51 & 4.42 & 4.19 & 7.00 & 13.94 \\[1.5pt]
\textsc{FDS}~\citep{DIR} & 132.64 & 109.28 & 311.35 & 851.06 & 7.83 & 7.23 & 12.60 & 22.37 & 4.42 & 4.20 & 6.93 & 13.48 \\[1.5pt]
\textsc{LDS + FDS}~\citep{DIR} & 129.35 & 106.52 & 311.49 & 811.82 & 7.78 & 7.20 & 12.61 & 22.19 & 4.37 & 4.12 & 7.39 & 12.61 \\[1.5pt]
\textsc{RANKSIM}~\citep{RankSim} & 125.30 & 102.68 & 299.10 & 777.48 & 7.50 & 6.93 & 12.09 & 21.68 & 4.19 & 3.97 & 6.65 & 13.28 \\[1.5pt]
% \textsc{LDS + FDS + RANKSIM} & 129.09 & 106.21 & 303.82 & 798.05 & 7.69 & 7.13 & 12.30 & 21.43 & 4.34 & 4.13 & 6.72 & 12.48 \\[1.5pt]
\textsc{LDS + FDS + DER}~\citep{DER} & 120.86 & 97.75 & 297.64 & 873.10 & 7.24 & 6.64 & 11.87 & 23.44 & 3.93 & 3.69 & 6.64 & 16.00 \\[1.5pt]
\textsc{VIR (Ours)} & \textbf{118.94}\scriptsize{$\pm$1.10} & \textbf{96.10}\scriptsize{$\pm$0.80} & \textbf{295.79}\scriptsize{$\pm$1.20} & \textbf{771.47}\scriptsize{$\pm$3.10} & \textbf{7.19}\scriptsize{$\pm$0.03} & \textbf{6.56}\scriptsize{$\pm$0.03} & \textbf{11.81}\scriptsize{$\pm$0.04} & \textbf{20.96}\scriptsize{$\pm$0.05} & \textbf{3.85}\scriptsize{$\pm$0.04} & \textbf{3.63}\scriptsize{$\pm$0.05} & \textbf{6.51}\scriptsize{$\pm$0.03} & \textbf{12.23}\scriptsize{$\pm$0.03} \\[1.5pt] \midrule\midrule
\textsc{\textbf{Ours} vs. Vanilla} & \textcolor{ForestGreen}{\textbf{+19.12}} & \textcolor{ForestGreen}{\textbf{+12.6}} & \textcolor{ForestGreen}{\textbf{+70.3}} & \textcolor{ForestGreen}{\textbf{+193.45}} & \textcolor{ForestGreen}{\textbf{+0.87}} & \textcolor{ForestGreen}{\textbf{+0.67}} & \textcolor{ForestGreen}{\textbf{+3.31}} & \textcolor{ForestGreen}{\textbf{+5.37}} & \textcolor{ForestGreen}{\textbf{+0.72}} & \textcolor{ForestGreen}{\textbf{+0.54}} & \textcolor{ForestGreen}{\textbf{+4.08}} & \textcolor{ForestGreen}{\textbf{+8.23}} \\[1.5pt]
\textsc{\textbf{Ours} vs. Infer Noise} & \textcolor{ForestGreen}{\textbf{+24.68}} & \textcolor{ForestGreen}{\textbf{+16.16}} & \textcolor{ForestGreen}{\textbf{+77.40}} & \textcolor{ForestGreen}{\textbf{+190.50}} & \textcolor{ForestGreen}{\textbf{+0.92}} & \textcolor{ForestGreen}{\textbf{+0.80}} & \textcolor{ForestGreen}{\textbf{+3.42}} & \textcolor{ForestGreen}{\textbf{+5.33}} & \textcolor{ForestGreen}{\textbf{+0.83}} & \textcolor{ForestGreen}{\textbf{+0.70}} & \textcolor{ForestGreen}{\textbf{+4.14}} & \textcolor{ForestGreen}{\textbf{+8.08}} \\[1.5pt]
% \textsc{\textbf{Ours} vs. SQINV} & \textcolor{ForestGreen}{\textbf{+15.42}} & \textcolor{ForestGreen}{\textbf{+15.13}} & \textcolor{ForestGreen}{\textbf{+12.84}} & \textcolor{ForestGreen}{\textbf{+62.61}} & \textcolor{ForestGreen}{\textbf{+0.68}} & \textcolor{ForestGreen}{\textbf{+0.68}} & \textcolor{ForestGreen}{\textbf{+0.63}} & \textcolor{ForestGreen}{\textbf{+1.8}} & \textcolor{ForestGreen}{\textbf{+0.62}} & \textcolor{ForestGreen}{\textbf{+0.59}} & \textcolor{ForestGreen}{\textbf{+0.74}} & \textcolor{ForestGreen}{\textbf{+2.87}} \\[1.5pt]
\textsc{\textbf{Ours} vs. DER} & \textcolor{ForestGreen}{\textbf{+14.87}} & \textcolor{ForestGreen}{\textbf{+11.41}} & \textcolor{ForestGreen}{\textbf{+37.11}} & \textcolor{ForestGreen}{\textbf{+144.71}} & \textcolor{ForestGreen}{\textbf{+0.66}} & \textcolor{ForestGreen}{\textbf{+0.62}} & \textcolor{ForestGreen}{\textbf{+1.54}} & \textcolor{ForestGreen}{\textbf{+3.16}} & \textcolor{ForestGreen}{\textbf{+0.62}} & \textcolor{ForestGreen}{\textbf{+0.55}} & \textcolor{ForestGreen}{\textbf{+1.67}} & \textcolor{ForestGreen}{\textbf{+2.95}} \\[1.5pt]
\textsc{\textbf{Ours} vs. LDS + FDS} & \textcolor{ForestGreen}{\textbf{+10.41}} & \textcolor{ForestGreen}{\textbf{+10.42}} & \textcolor{ForestGreen}{\textbf{+15.7}} & \textcolor{ForestGreen}{\textbf{+40.35}} & \textcolor{ForestGreen}{\textbf{+0.59}} & \textcolor{ForestGreen}{\textbf{+0.64}} & \textcolor{ForestGreen}{\textbf{+0.8}} & \textcolor{ForestGreen}{\textbf{+1.23}} & \textcolor{ForestGreen}{\textbf{+0.52}} & \textcolor{ForestGreen}{\textbf{+0.49}} & \textcolor{ForestGreen}{\textbf{+0.88}} & \textcolor{ForestGreen}{\textbf{+0.38}} \\[1.5pt]
\textsc{\textbf{Ours} vs. RANKSIM} & \textcolor{ForestGreen}{\textbf{+6.36}} & \textcolor{ForestGreen}{\textbf{+6.58}} & \textcolor{ForestGreen}{\textbf{+3.31}} & \textcolor{ForestGreen}{\textbf{+6.01}} & \textcolor{ForestGreen}{\textbf{+0.31}} & \textcolor{ForestGreen}{\textbf{+0.37}} & \textcolor{ForestGreen}{\textbf{+0.28}} & \textcolor{ForestGreen}{\textbf{+0.72}} & \textcolor{ForestGreen}{\textbf{+0.34}} & \textcolor{ForestGreen}{\textbf{+0.34}} & \textcolor{ForestGreen}{\textbf{+0.14}} & \textcolor{ForestGreen}{\textbf{+1.05}} \\
\bottomrule[1.5pt]
\end{tabular}}
\end{center}
\vspace{-0.6cm}
\end{table*}
\begin{table*}[h]
\setlength{\tabcolsep}{1pt}
\caption{\textbf{Accuracy on STS-B-DIR.} For baselines, we directly use the reported performance in their paper and therefore do not have error bars.}
\vspace{-1pt}
\label{table:sts-accuracy-total}
\small
\begin{center}
\resizebox{\textwidth}{!}{
\begin{tabular}{l|cccc|cccc|cccc|cccc}
\toprule[1.5pt]
Metrics & \multicolumn{4}{c|}{MSE~$\downarrow$} & \multicolumn{4}{c|}{MAE~$\downarrow$} & \multicolumn{4}{c|}{Pearson~$\uparrow$} & \multicolumn{4}{c}{Spearman~$\uparrow$} \\ \midrule
Shot & All & Many & Medium & Few & All & Many & Medium & Few & All & Many & Medium & Few & All & Many & Medium & Few \\ \midrule\midrule
\textsc{Vanilla}~\citep{DIR} & 0.974 & 0.851 & 1.520 & 0.984 & 0.794 & 0.740 & 1.043 & 0.771 & 0.742 & 0.720 & 0.627 & 0.752 & 0.744 & 0.688 & 0.505 & 0.750 \\[1.5pt]
\textsc{VAE}~\citep{VAE} & 0.968 & 0.833 & 1.511 & 1.102 & 0.782 & 0.721 & 1.040 & 0.767 & 0.751 & 0.724 & 0.621 & 0.749 & 0.752 & 0.674 & 0.501 & 0.743 \\[1.5pt]
\textsc{Deep Ensemble}~\citep{DeepEnsemble} & 0.972 & 0.846 & 1.496 & 1.032 & 0.791 & 0.723 & 1.096 & 0.792 & 0.746 & 0.723 & 0.619 & 0.750 & 0.741 & 0.689 & 0.501 & 0.746 \\[1.5pt]
\textsc{Infer Noise}~\citep{TFuncertainty} & 0.954 & 0.980 & 1.408 & 0.967 & 0.795 & 0.745 & 0.977 & 0.741 & 0.747 & 0.711 & 0.631 & 0.756 & 0.742 & 0.681 & 0.508 & 0.753 \\[1.5pt]
\textsc{SmoteR}~\citep{IRrelated1} & 1.046 & 0.924 & 1.542 & 1.154 & 0.834 & 0.782 & 1.052 & 0.861 & 0.726 & 0.693 & 0.653 & 0.706 & 0.726 & 0.656 & 0.556 & 0.691 \\[1.5pt]
\textsc{SMOGN}~\citep{IRrelated2} & 0.990 & 0.896 & 1.327 & 1.175 & 0.798 & 0.755 & 0.967 & 0.848 & 0.732 & 0.704 & 0.655 & 0.692 & 0.732 & 0.670 & 0.551 & 0.670 \\[1.5pt]
\textsc{Inv}~\citep{DIR} & 1.005 & 0.894 & 1.482 & 1.046 & 0.805 & 0.761 & 1.016 & 0.780 & 0.728 & 0.703 & 0.625 & 0.732 & 0.731 & 0.672 & 0.541 & 0.714 \\[1.5pt]
\textsc{DER}~\citep{DER} & 1.001 & 0.912 & 1.368 & 1.055 & 0.812 & 0.772 & 0.989 & 0.809 & 0.732 & 0.711 & 0.646 & 0.742 & 0.731 & 0.672 & 0.519 & 0.739 \\[1.5pt]
\textsc{LDS}~\citep{DIR} & 0.914 & 0.819 & 1.319 & 0.955 & 0.773 & 0.729 & 0.970 & 0.772 & 0.756 & 0.734 & 0.638 & 0.762 & 0.761 & 0.704 & 0.556 & 0.743 \\[1.5pt]
\textsc{FDS}~\citep{DIR} & 0.927 & 0.851 & 1.225 & 1.012 & 0.771 & 0.740 & 0.914 & 0.756 & 0.750 & 0.724 & 0.667 & 0.742 & 0.752 & 0.692 & 0.552 & 0.748 \\[1.5pt]
\textsc{LDS + FDS}~\citep{DIR} & 0.907 & 0.802 & 1.363 & 0.942 & 0.766 & 0.718 & 0.986 & 0.755 & 0.760 & 0.740 & 0.652 & 0.766 & 0.764 & 0.707 & 0.549 & 0.749 \\[1.5pt]
\textsc{RANKSIM}~\citep{RankSim} & 0.903 & 0.908 & 0.911 & 0.804 & 0.761 & 0.759 & 0.786 & 0.712 & 0.758 & 0.706 & 0.690 & 0.827 & 0.758 & 0.673 & 0.493 & 0.849 \\[1.5pt]
\textsc{LDS + FDS + DER}~\citep{DER} & 1.007 & 0.880 & 1.535 & 1.086 & 0.812 & 0.757 & 1.046 & 0.842 & 0.729 & 0.714 & 0.635 & 0.731 & 0.730 & 0.680 & 0.526 & 0.699 \\[1.5pt]
\textsc{VIR (Ours)} & \textbf{0.892}\scriptsize{$\pm$0.002} & \textbf{0.795}\scriptsize{$\pm$0.002} & \textbf{0.899}\scriptsize{$\pm$0.002} & \textbf{0.781}\scriptsize{$\pm$0.003} & \textbf{0.740}\scriptsize{$\pm$0.002} & \textbf{0.706}\scriptsize{$\pm$0.001} & \textbf{0.779}\scriptsize{$\pm$0.002} & \textbf{0.708}\scriptsize{$\pm$0.002} & \textbf{0.776}\scriptsize{$\pm$0.004} & \textbf{0.752}\scriptsize{$\pm$0.003} & \textbf{0.696}\scriptsize{$\pm$0.005} & \textbf{0.845}\scriptsize{$\pm$0.006} & \textbf{0.775}\scriptsize{$\pm$0.003} & \textbf{0.716}\scriptsize{$\pm$0.003} & \textbf{0.586}\scriptsize{$\pm$0.005} & \textbf{0.861}\scriptsize{$\pm$0.007} \\[1.5pt] \midrule\midrule
\textsc{\textbf{Ours} vs. Vanilla} & \textcolor{ForestGreen}{\textbf{+0.082}} & \textcolor{ForestGreen}{\textbf{+0.056}} & \textcolor{ForestGreen}{\textbf{+0.621}} & \textcolor{ForestGreen}{\textbf{+0.203}} & \textcolor{ForestGreen}{\textbf{+0.054}} & \textcolor{ForestGreen}{\textbf{+0.034}} & \textcolor{ForestGreen}{\textbf{+0.264}} & \textcolor{ForestGreen}{\textbf{+0.063}} & \textcolor{ForestGreen}{\textbf{+0.034}} & \textcolor{ForestGreen}{\textbf{+0.032}} & \textcolor{ForestGreen}{\textbf{+0.069}} & \textcolor{ForestGreen}{\textbf{+0.093}} & \textcolor{ForestGreen}{\textbf{+0.031}} & \textcolor{ForestGreen}{\textbf{+0.028}} & \textcolor{ForestGreen}{\textbf{+0.081}} & \textcolor{ForestGreen}{\textbf{+0.111}}  \\[1.5pt]
\textsc{\textbf{Ours} vs. Infer Noise} & \textcolor{ForestGreen}{\textbf{+0.062}} & \textcolor{ForestGreen}{\textbf{+0.185}} & \textcolor{ForestGreen}{\textbf{+0.509}} & \textcolor{ForestGreen}{\textbf{+0.186}} & \textcolor{ForestGreen}{\textbf{+0.055}} & \textcolor{ForestGreen}{\textbf{+0.039}} & \textcolor{ForestGreen}{\textbf{+0.198}} & \textcolor{ForestGreen}{\textbf{+0.033}} & \textcolor{ForestGreen}{\textbf{+0.029}} & \textcolor{ForestGreen}{\textbf{+0.041}} & \textcolor{ForestGreen}{\textbf{+0.065}} & \textcolor{ForestGreen}{\textbf{+0.089}} & \textcolor{ForestGreen}{\textbf{+0.033}} & \textcolor{ForestGreen}{\textbf{+0.035}} & \textcolor{ForestGreen}{\textbf{+0.078}} & \textcolor{ForestGreen}{\textbf{+0.108}} \\[1.5pt]
\textsc{\textbf{Ours} vs. DER} & \textcolor{ForestGreen}{\textbf{+0.109}} & \textcolor{ForestGreen}{\textbf{+0.117}} & \textcolor{ForestGreen}{\textbf{+0.469}} & \textcolor{ForestGreen}{\textbf{+0.274}} & \textcolor{ForestGreen}{\textbf{+0.072}} & \textcolor{ForestGreen}{\textbf{+0.066}} & \textcolor{ForestGreen}{\textbf{+0.210}} & \textcolor{ForestGreen}{\textbf{+0.101}} & \textcolor{ForestGreen}{\textbf{+0.044}} & \textcolor{ForestGreen}{\textbf{+0.041}} & \textcolor{ForestGreen}{\textbf{+0.050}} & \textcolor{ForestGreen}{\textbf{+0.103}} & \textcolor{ForestGreen}{\textbf{+0.044}} & \textcolor{ForestGreen}{\textbf{+0.044}} & \textcolor{ForestGreen}{\textbf{+0.067}} & \textcolor{ForestGreen}{\textbf{+0.122}} \\[1.5pt]
\textsc{\textbf{Ours} vs. LDS + FDS} & \textcolor{ForestGreen}{\textbf{+0.015}} & \textcolor{ForestGreen}{\textbf{+0.007}} & \textcolor{ForestGreen}{\textbf{+0.464}} & \textcolor{ForestGreen}{\textbf{+0.161}} & \textcolor{ForestGreen}{\textbf{+0.026}} & \textcolor{ForestGreen}{\textbf{+0.012}} & \textcolor{ForestGreen}{\textbf{+0.207}} & \textcolor{ForestGreen}{\textbf{+0.047}} & \textcolor{ForestGreen}{\textbf{+0.016}} & \textcolor{ForestGreen}{\textbf{+0.012}} & \textcolor{ForestGreen}{\textbf{+0.044}} & \textcolor{ForestGreen}{\textbf{+0.079}} & \textcolor{ForestGreen}{\textbf{+0.011}} & \textcolor{ForestGreen}{\textbf{+0.009}} & \textcolor{ForestGreen}{\textbf{+0.037}} & \textcolor{ForestGreen}{\textbf{+0.112}} \\[1.5pt]
\textsc{\textbf{Ours} vs. RANKSIM} & \textcolor{ForestGreen}{\textbf{+0.011}} & \textcolor{ForestGreen}{\textbf{+0.113}} & \textcolor{ForestGreen}{\textbf{+0.012}} & \textcolor{ForestGreen}{\textbf{+0.023}} & \textcolor{ForestGreen}{\textbf{+0.021}} & \textcolor{ForestGreen}{\textbf{+0.053}} & \textcolor{ForestGreen}{\textbf{+0.007}} & \textcolor{ForestGreen}{\textbf{+0.004}} & \textcolor{ForestGreen}{\textbf{+0.018}} & \textcolor{ForestGreen}{\textbf{+0.046}} & \textcolor{ForestGreen}{\textbf{+0.006}} & \textcolor{ForestGreen}{\textbf{+0.018}} & \textcolor{ForestGreen}{\textbf{+0.017}} & \textcolor{ForestGreen}{\textbf{+0.043}} & \textcolor{ForestGreen}{\textbf{+0.093}} & \textcolor{ForestGreen}{\textbf{+0.012}} \\
\bottomrule[1.5pt]
\end{tabular}}
\end{center}
\vspace{-0.7cm}
\end{table*} 
\begin{table*}[h]
\setlength{\tabcolsep}{7.5pt}
\caption{Accuracy on NYUD2-DIR.}
\vspace{-0.5pt}
\label{table:nyud2-accuracy-total}
\small
\begin{center}
\resizebox{\textwidth}{!}{
\begin{tabular}{l|cccc|cccc|cccc|cccc|cccc}
\toprule[1.5pt]
Metrics      & \multicolumn{4}{c|}{RMSE~$\downarrow$}     & \multicolumn{4}{c|}{$\log_{10}$~$\downarrow$} & \multicolumn{4}{c|}{$\delta_{1}$~$\uparrow$} & \multicolumn{4}{c|}{$\delta_{2}$~$\uparrow$} & \multicolumn{4}{c}{$\delta_{3}$~$\uparrow$}  \\ \midrule
Shot & All & Many & Medium & Few & All & Many & Medium & Few & All & Many & Medium & Few & All & Many & Medium & Few & All & Many & Medium & Few \\ \midrule\midrule
\textsc{Vanilla}~\citep{DIR} & 1.477 & 0.591 & 0.952 & 2.123 & 0.086 & 0.066 & 0.082 & 0.107 & 0.677 & 0.777 & 0.693 & 0.570 & 0.899 & 0.956 & 0.906 & 0.840 & 0.969 & 0.990 & 0.975 & 0.946 \\[1.5pt]
\textsc{VAE}~\citep{VAE} & 1.483 & 0.596 & 0.949 & 2.131 & 0.084 & 0.062 & 0.079 & 0.110 & 0.675 & 0.774 & 0.693 & 0.575 & 0.894 & 0.951 & 0.906 & 0.846 & 0.963 & 0.982 & 0.976 & 0.951 \\[1.5pt]
\textsc{Deep Ensemble}~\citep{DeepEnsemble} & 1.479 & 0.595 & 0.954 & 2.126 & 0.091 & 0.067 & 0.082 & 0.109 & 0.678 & 0.782 & 0.702 & 0.583 & 0.906 & 0.961 & 0.912 & 0.851 & 0.972 & 0.993 & 0.981 & 0.956 \\[1.5pt]
\textsc{Infer Noise}~\citep{TFuncertainty} & 1.480 & 0.594 & 0.959 & 2.125 & 0.088 & 0.069 & 0.089 & 0.111 & 0.672 & 0.768 & 0.688 & 0.566 & 0.894 & 0.949 & 0.902 & 0.834 & 0.963 & 0.983 & 0.970 & 0.941 \\[1.5pt]
\textsc{DER}~\citep{DER} & 1.483 & 0.615 & 0.961 & 2.142 & 0.098 & 0.089 & 0.091 & 0.110 & 0.597 & 0.647 & 0.657 & 0.525 & 0.880 & 0.904 & 0.894 & 0.851 & 0.964 & 0.974 & 0.959 & 0.955 \\[1.5pt]
\textsc{LDS}~\citep{DIR} & 1.387 & 0.671 & 0.913 & 1.954 & 0.086 & 0.079 & 0.079 & 0.097 & 0.672 & 0.701 & 0.706 & 0.630 & 0.907 & 0.932 & 0.929 & 0.875 & 0.976 & 0.984 & 0.982 & 0.964 \\[1.5pt]
\textsc{FDS}~\citep{DIR} & 1.442 & 0.615 & 0.940 & 2.059 & 0.084 & 0.069 & 0.080 & 0.101 & 0.681 & 0.760 & 0.695 & 0.596 & 0.903 & 0.952 & 0.918 & 0.849 & 0.975 & 0.989 & 0.976 & 0.960 \\[1.5pt]
\textsc{LDS + FDS}~\citep{DIR} & 1.338 & 0.670 & 0.851 & 1.880 & 0.080 & 0.074 & 0.070 & 0.090 & 0.705 & 0.730 & 0.764 & 0.655 & 0.916 & 0.939 & 0.941 & 0.884 & 0.979 & 0.984 & 0.983 & 0.971 \\[1.5pt]
\textsc{LDS + FDS + DER}~\citep{DER} & 1.426 & 0.703 & 0.906 & 1.918 & 0.092 & 0.081 & 0.088 & 0.098 & 0.676 & 0.677 & 0.754 & 0.621 & 0.889 & 0.912 & 0.899 & 0.862 & 0.964 & 0.976 & 0.969 & 0.958 \\[1.5pt]
\textsc{VIR (Ours)} & \textbf{1.305} & \textbf{0.589} & \textbf{0.831} & \textbf{1.749} & \textbf{0.075} & \textbf{0.060} & \textbf{0.064} & \textbf{0.082} & \textbf{0.722} & \textbf{0.781} & \textbf{0.793} & \textbf{0.688} & \textbf{0.929} & \textbf{0.966} & \textbf{0.961} & \textbf{0.910} & \textbf{0.985} & \textbf{0.993} & \textbf{0.989} & \textbf{0.979} \\[1.5pt] \midrule\midrule
\textsc{\textbf{Ours} vs. Vanilla} & \textcolor{ForestGreen}{\textbf{+0.172}} & \textcolor{ForestGreen}{\textbf{+0.002}} & \textcolor{ForestGreen}{\textbf{+0.121}} & \textcolor{ForestGreen}{\textbf{+0.374}} & \textcolor{ForestGreen}{\textbf{+0.011}} & \textcolor{ForestGreen}{\textbf{+0.006}} & \textcolor{ForestGreen}{\textbf{+0.018}} & \textcolor{ForestGreen}{\textbf{+0.025}} & \textcolor{ForestGreen}{\textbf{+0.045}} & \textcolor{ForestGreen}{\textbf{+0.004}} & \textcolor{ForestGreen}{\textbf{+0.100}} & \textcolor{ForestGreen}{\textbf{+0.118}} & \textcolor{ForestGreen}{\textbf{+0.003}} & \textcolor{ForestGreen}{\textbf{+0.010}} & \textcolor{ForestGreen}{\textbf{+0.055}} & \textcolor{ForestGreen}{\textbf{+0.070}} & \textcolor{ForestGreen}{\textbf{+0.016}} & \textcolor{ForestGreen}{\textbf{+0.003}} & \textcolor{ForestGreen}{\textbf{+0.014}} & \textcolor{ForestGreen}{\textbf{+0.033}} \\[1.5pt]
\textsc{\textbf{Ours} vs. Infer Noise} & \textcolor{ForestGreen}{\textbf{+0.175}} & \textcolor{ForestGreen}{\textbf{+0.005}} & \textcolor{ForestGreen}{\textbf{+0.128}} & \textcolor{ForestGreen}{\textbf{+0.376}} & \textcolor{ForestGreen}{\textbf{+0.013}} & \textcolor{ForestGreen}{\textbf{+0.009}} & \textcolor{ForestGreen}{\textbf{+0.025}} & \textcolor{ForestGreen}{\textbf{+0.029}} & \textcolor{ForestGreen}{\textbf{+0.050}} & \textcolor{ForestGreen}{\textbf{+0.013}} & \textcolor{ForestGreen}{\textbf{+0.105}} & \textcolor{ForestGreen}{\textbf{+0.122}} & \textcolor{ForestGreen}{\textbf{+0.035}} & \textcolor{ForestGreen}{\textbf{+0.017}} & \textcolor{ForestGreen}{\textbf{+0.059}} & \textcolor{ForestGreen}{\textbf{+0.076}} & \textcolor{ForestGreen}{\textbf{+0.022}} & \textcolor{ForestGreen}{\textbf{+0.010}} & \textcolor{ForestGreen}{\textbf{+0.019}} & \textcolor{ForestGreen}{\textbf{+0.038}} \\[1.5pt]
\textsc{\textbf{Ours} vs. DER} & \textcolor{ForestGreen}{\textbf{+0.178}} & \textcolor{ForestGreen}{\textbf{+0.026}} & \textcolor{ForestGreen}{\textbf{+0.130}} & \textcolor{ForestGreen}{\textbf{+0.393}} & \textcolor{ForestGreen}{\textbf{+0.023}} & \textcolor{ForestGreen}{\textbf{+0.029}} & \textcolor{ForestGreen}{\textbf{+0.027}} & \textcolor{ForestGreen}{\textbf{+0.028}} & \textcolor{ForestGreen}{\textbf{+0.125}} & \textcolor{ForestGreen}{\textbf{+0.134}} & \textcolor{ForestGreen}{\textbf{+0.136}} & \textcolor{ForestGreen}{\textbf{+0.163}} & \textcolor{ForestGreen}{\textbf{+0.049}} & \textcolor{ForestGreen}{\textbf{+0.062}} & \textcolor{ForestGreen}{\textbf{+0.067}} & \textcolor{ForestGreen}{\textbf{+0.059}} & \textcolor{ForestGreen}{\textbf{+0.021}} & \textcolor{ForestGreen}{\textbf{+0.019}} & \textcolor{ForestGreen}{\textbf{+0.030}} & \textcolor{ForestGreen}{\textbf{+0.024}} \\[1.5pt]
\textsc{\textbf{Ours} vs. LDS + FDS} & \textcolor{ForestGreen}{\textbf{+0.033}} & \textcolor{ForestGreen}{\textbf{+0.081}} & \textcolor{ForestGreen}{\textbf{+0.020}} & \textcolor{ForestGreen}{\textbf{+0.131}} & \textcolor{ForestGreen}{\textbf{+0.005}} & \textcolor{ForestGreen}{\textbf{+0.014}} & \textcolor{ForestGreen}{\textbf{+0.006}} & \textcolor{ForestGreen}{\textbf{+0.008}} & \textcolor{ForestGreen}{\textbf{+0.017}} & \textcolor{ForestGreen}{\textbf{+0.051}} & \textcolor{ForestGreen}{\textbf{+0.029}} & \textcolor{ForestGreen}{\textbf{+0.033}} & \textcolor{ForestGreen}{\textbf{+0.013}} & \textcolor{ForestGreen}{\textbf{+0.027}} & \textcolor{ForestGreen}{\textbf{+0.020}} & \textcolor{ForestGreen}{\textbf{+0.026}} & \textcolor{ForestGreen}{\textbf{+0.006}} & \textcolor{ForestGreen}{\textbf{+0.009}} & \textcolor{ForestGreen}{\textbf{+0.006}} & \textcolor{ForestGreen}{\textbf{+0.008}}  \\
\bottomrule[1.5pt]
\end{tabular}}
\end{center}
\vspace{-0.7cm}
\end{table*}
\begin{table*}[h]
\setlength{\tabcolsep}{7.5pt}
\caption{Uncertainty on NYUD2-DIR.}
\vspace{-0.5pt}
\label{table:nyud2-uncertainty-total}
\small
\begin{center}
\resizebox{\textwidth}{!}{
\begin{tabular}{l|cccc|cccc}
\toprule[1.5pt]
Metrics & \multicolumn{4}{c|}{NLL~$\downarrow$} & \multicolumn{4}{c}{AUSE~$\downarrow$} \\ \midrule
Shot & All & Many & Medium & Few & All & Many & Medium & Few \\ \midrule\midrule
\textsc{Deep Ensemble}~\citep{DeepEnsemble} & 5.054 & 3.640 & 3.856 & 5.335 & 0.782 & 0.658 & 0.604 & 0.583 \\[1.5pt]
\textsc{Infer Noise}~\citep{TFuncertainty} & 4.542 & 3.120 & 3.634 & 5.028 & 0.764 & 0.643 & 0.566 & 0.408  \\[1.5pt]
\textsc{DER}~\citep{DER} & 4.169 & 2.913 & 3.011 & 4.777 & 0.713 & 0.623 & 0.535 & 0.382 \\[1.5pt]
\textsc{LDS + FDS + DER}~\citep{DER} & 4.175 & 2.987 & 2.976 & 4.686 & 0.715 & 0.629 & 0.511 & 0.366 \\[1.5pt]
\textsc{VIR (Ours)} & \textbf{3.866} & \textbf{2.815} & \textbf{2.727} & \textbf{4.113} & \textbf{0.690} & \textbf{0.603} & \textbf{0.493} & \textbf{0.335} \\[1.5pt] \midrule\midrule
\textsc{\textbf{Ours} vs. DER} & \textcolor{ForestGreen}{\textbf{+0.303}} & \textcolor{ForestGreen}{\textbf{+0.098}} & \textcolor{ForestGreen}{\textbf{+0.284}} & \textcolor{ForestGreen}{\textbf{+0.664}} & \textcolor{ForestGreen}{\textbf{+0.023}} & \textcolor{ForestGreen}{\textbf{+0.020}} & \textcolor{ForestGreen}{\textbf{+0.042}} & \textcolor{ForestGreen}{\textbf{+0.047}} \\[1.5pt]
\textsc{\textbf{Ours} vs. LDS + FDS + DER} & \textcolor{ForestGreen}{\textbf{+0.309}} & \textcolor{ForestGreen}{\textbf{+0.172}} & \textcolor{ForestGreen}{\textbf{+0.249}} & \textcolor{ForestGreen}{\textbf{+0.573}} & \textcolor{ForestGreen}{\textbf{+0.025}} & \textcolor{ForestGreen}{\textbf{+0.026}} & \textcolor{ForestGreen}{\textbf{+0.018}} & \textcolor{ForestGreen}{\textbf{+0.031}} \\[1.5pt]
\bottomrule[1.5pt]
\end{tabular}}
\end{center}
\vspace{-0.3cm}
\end{table*}
\begin{table*}[h]
\setlength{\tabcolsep}{7.5pt}
 \caption{Ablation study on $\lambda$ on AgeDB-DIR.}
\vspace{-1pt}
\label{table:lambda}
\small
\begin{center}
\resizebox{\textwidth}{!}{
\begin{tabular}{l|cccc|cccc|cccc}
\toprule[1.5pt]
Metrics      & \multicolumn{4}{c|}{MSE~$\downarrow$}     & \multicolumn{4}{c|}{MAE~$\downarrow$}     & \multicolumn{4}{c}{NLL~$\downarrow$}  \\ \midrule
Shot         & All  & Many & Medium & Few   & All  & Many & Medium & Few   & All  & Many & Medium & Few   \\ \midrule\midrule
\textsc{$\lambda=10.0$}  & 104.31 & 91.01 & 116.43 & 196.35 & 7.88 & 7.38 & 8.42  & 11.13 & 3.827 & 3.733 & 4.140 & 4.407  \\[1.5pt]
\textsc{$\lambda=1.0$}  & 104.10 & 87.28 & 128.26 & 196.12 & 7.83 & 7.21 & 8.81  & 10.89 & 3.848 & 3.738 & 4.041 & 4.356  \\[1.5pt]
\textsc{$\lambda=0.1$}  & 86.28 & 76.87 & 101.57 & 132.90 & 7.19 & 6.75 & 7.97 & 9.19 & 3.785 & 3.694 & 3.963 & 4.151  \\[1.5pt]
\textsc{$\lambda=0.01$} & 86.86 & 76.58 & 99.95 & 147.82 & 7.12 & 6.69 & 7.72 & 9.59 & 3.887 & 3.797 & 4.007 & 4.401  \\[1.5pt]
\textsc{$\lambda=0.001$}  & 87.25 & 74.13 & 104.78 & 162.64 & 7.13 & 6.64 & 7.92  & 9.63 & 3.980 & 3.868 & 4.161 & 4.546  \\[1.5pt]
\bottomrule[1.5pt]
\end{tabular}
}
\end{center}
\vspace{+0.1cm}
\end{table*}
\begin{table*}[h]
\setlength{\tabcolsep}{7.5pt}
\caption{\textbf{Comparison for different numbers of Bins.} "Med." is short for "Medium".}
\vspace{-1pt}
\label{table:bin-size}
\small
\begin{center}
\resizebox{\textwidth}{!}{
\begin{tabular}{l|c|cccc|cccc|cccc}
\toprule[1.5pt]
Metrics & Bins & \multicolumn{4}{c|}{MSE~$\downarrow$} & \multicolumn{4}{c|}{MAE~$\downarrow$} & \multicolumn{4}{c}{GM~$\downarrow$} \\ \midrule
Shot & \# & All & Many & Med. & Few & All & Many & Med. & Few & All & Many & Med. & Few \\ \midrule\midrule
\textsc{Ranksim} & 100 & 83.51 & 71.99 & 99.14 & 149.05 & 7.02 & 6.49 & 7.84 & 9.68 & 4.53 & 4.13 & 5.37 & 6.89 \\[1.5pt]
\textsc{VIR \scriptsize{(Ours)}} & 100 & \textbf{81.76} & \textbf{70.61} & \textbf{91.47} & \textbf{142.36} & \textbf{6.99} & \textbf{6.39} & \textbf{7.47} & \textbf{9.51} & \textbf{4.41} & \textbf{4.07} & \textbf{5.05} & \textbf{6.23} \\[1.5pt] \midrule\midrule
\textsc{Ranksim} & 33 & 109.45 & 91.78 & 128.10 & 187.13 & 7.46 & 6.94 & 8.42 & 10.66 & 5.13 & 4.70 & 5.23 & 8.21 \\[1.5pt]
\textsc{VIR \scriptsize{(Ours)}} & 33 & \textbf{84.77} & \textbf{77.29} & \textbf{95.66} & \textbf{125.33} & \textbf{7.01} & \textbf{6.70} & \textbf{7.45} & \textbf{8.74} & \textbf{4.36} & \textbf{4.20} & \textbf{4.73} & \textbf{4.94} \\[1.5pt] \midrule\midrule
\textsc{Ranksim} & 20 & 98.71 & 84.38 & 107.89 & 171.04 & 7.32 & 6.78 & 8.35 & 10.57 & 5.33 & 4.51 & 5.69 & 7.92 \\[1.5pt]
\textsc{VIR \scriptsize{(Ours)}} & 20 & \textbf{84.05} & \textbf{72.12} & \textbf{100.49} & \textbf{151.25} & \textbf{7.06} & \textbf{6.50} & \textbf{7.90} & \textbf{10.06} & \textbf{4.49} & \textbf{4.05} & \textbf{5.34} & \textbf{7.28} \\[1.5pt]
\bottomrule[1.5pt]
\end{tabular}}
\end{center}
\vspace{-0.3cm}
\end{table*} 

\section{Details for Experiments}
{\textbf{Datasets.} In this work, we evaluate our methods in terms of prediction accuracy and uncertainty estimation on four imbalanced datasets\footnote{Among the five datasets proposed in~\citep{DIR}, only four of them are publicly available.}, AgeDB~\citep{AGEDB}, IMDB-WIKI~\citep{IMDBWIKI}, STS-B~\citep{STS-B}, and NYUD2-DIR~\citep{NYUD2}. Due to page limit, the results for NYUD2-DIR~\citep{NYUD2} are in the supplementary. We follow the preprocessing procedures in DIR~\citep{DIR}. Details for each datasets are to the Supplement, and details for label density distributions and levels of imbalance are discussed in DIR~\citep{DIR}.}

\begin{compactitem}
\item \emph{AgeDB-DIR}: We use AgeDB-DIR constructed in DIR~\citep{DIR}, which contains 12.2K images for training and 2.1K images for validation and testing. The maximum age in this dataset is 101 and the minimum age is 0, and the number of images per bin varies between 1 and 353.
\item \emph{IW-DIR}: We use IMDB-WIKI-DIR (IW-DIR) constructed in DIR~\citep{DIR}, which contains 191.5K training images and 11.0K validation and testing images. The maximum age is 186 and minimum age is 0; the maximum bin density is 7149, and minimum bin density is 1.
\item {\emph{STS-B-DIR}: We use STS-B-DIR constructed in DIR~\citep{DIR}, which contains 5.2K pairs of training sentences and 1.0K pairs for validation and testing. This dataset is a collection of sentence pairs generated from news headlines, video captions, etc. Each pair is annotated by multiple annotators with a similarity score between 0 and 5.}
\item { {\emph{NYUD2-DIR}}: We use NYUD2-DIR constructed in DIR~\citep{DIR}, which contains 50K images for training and 654 images for testing, and to make the test set balanced 9357 test pixels for each bin are randomly selected. The depth maps have an upper bound of 10 meters, and we set the bin length as 0.1 meter.}
\end{compactitem}

{\textbf{Baselines.} %Following~\cite{DIR}, w
We use ResNet-50~\citep{ResNet} (for AgeDB-DIR, IMDB-WIKI-DIR and NYUD2-DIR) and BiLSTM~\citep{bilstm} (for STS-B-DIR) as our backbone networks, and moredetails for baseline are in the supplement. we describe the baselines below.}

\begin{compactitem}
\item \emph{Vanilla}: We use the term \textbf{VANILLA} to denote a plain model without adding any approaches.
\item {\emph{Synthetic-Sample-Based Methods}: Various existing imbalanced regression methods are also included as baselines; these include Deep Ensemble~\citep{DeepEnsemble}, Infer Noise~\citep{TFuncertainty}, SMOTER~\citep{IRrelated1}, and SMOGN~\citep{IRrelated2}.}
\item {\emph{Cost-Sensitive Reweighting}}: As shown in DIR~\citep{DIR}, the square-root weighting variant (SQINV) baseline {(i.e. $\big(\sum_{b' \in \mathcal{B}} k (y_b, y_{b'}) p(y_{b'})\big)^{-1/2}$)} always outperforms Vanilla. {Therefore, for simplicity and fair comparison, \emph{all} our experiments (for both baselines and VIR) use SQINV weighting.} To use SQINV in VIR, one simply needs to use the symmetric kernel $k(\cdot, \cdot)$ described {in the Method section of the main paper.} To use SQINV in DER, we replace the final layer in DIR~\citep{DIR} with the DER layer~\citep{DER} to produce the predictive distributions.
\end{compactitem}

\textbf{Evaluation Process.} Following~\citep{longtailed, DIR}, for a data sample $x_i$ with its label $y_i$ which falls into the target bins $b_i$, we divide the label space into three disjoint subsets: many-shot region $\{b_i \in \mathcal{B} \mid y_i \in b_i \And |y_i| > 100 \}$, medium-shot region $\{b_i \in \mathcal{B} \mid y_i \in b_i \And 20 \leq |y_i| \leq 100 \}$, and few-shot region $\{b_i \in \mathcal{B} \mid y_i \in b_i \And |y_i| < 20 \}$, where $| \cdot |$ denotes the cardinality of the set. We report results on the overall test set and these subsets with the accuracy metrics discussed above.

{\textbf{Implementation Details.}
We use ResNet-50~\citep{ResNet} for all experiments in AgeDB-DIR and IMDB-WIKI-DIR. For all the experiments in STS-B-DIR, we use 300-dimensional GloVe word embeddings (840B Common Crawl version)~\cite{pennington2014glove} (following~\cite{wang2018glue}) and a two-layer, 1500-dimensional (per direction) BiLSTM~\citep{bilstm} with max pooling to encode the paired sentences into independent vectors $u$ and $v$, and then pass $[u;v;|u-v|;uv]$ to a regressor. We use the Adam optimizer~\citep{Adam} to train all models for 100 epochs, with same learning rate and decay by 0.1 and the 60-th and 90-th epoch, respectively. In order to determine the optimal batch size for training, we try different batch sizes and corroborate the conclusion from~\citep{DIR}, i.e., the optimal batch size is 256 when other hyperparameters are fixed.  Therefore, we stick to the batch size of $256$ for all the experiments in the paper. We also use the same configuration as in DIR~\citep{DIR} for other hyperparameters.}

We use PyTorch to implement our method. For fair comparison, we implemented a PyTorch version for the official TensorFlow implementation of DER\citep{DER}.
To make sure we can obtain the reasonable uncertainty estimations, we restrict the range for $\alpha$ to $[1.5, \infty)$ instead of $[1.0, \infty)$ in DER. Besides, in the activation function \emph{SoftPlus}, we set the hyperparameter \emph{beta} to 0.1. {As discussed in the main paper}, we implement a layer which produces the parameters $n, \Psi, \Phi$. We assign $2$ as the minimum number for $n$, and use the same hyperparameter settings for activation function for DER layer.

To search for a combination hyperparameters of prior distribution $\{ \gamma_0, \nu_0, \alpha_0, \beta_0 \}$ for NIG, we combine grid search method and random search method~\citep{random-search} to select the best hyperparameters. We first intuitively assign a value and a proper range with some step sizes which correspond to the hyperparameters, then, we apply grid search to search for the best combination for the hyperparameters on prior distributions. After locating a smaller range for each hyperparameters, we use random search to search for better combinations, if it exists. In the end, we find our best hyperparameter combinations for NIG prior distributions.

\section{Complete Results}
We include the complete results for all the experiments in AgeDB-DIR, IMDB-WIKI-DIR, STS-B-DIR and NYUD2-DIR in Table~\ref{table:agedb-accuracy-total}, Table~\ref{table:imdb-wiki-accuracy-total}, Table~\ref{table:sts-accuracy-total} and Table~\ref{table:nyud2-accuracy-total}. These results demonstrate the superiority of our methods. {Note that we did not select to report the baseline for \textbf{DIR + DEEP ENS.} since in DER paper~\citep{DER}, it has been showed that DER is better than Deep Ensemble method, therefore we select to report the baseline \textbf{DIR+DER} rather than \textbf{DIR + DEEP ENS.}.}

\section{Discussions}
\subsection{Why We Need Bins}
Throughout our method, we need to compute the statistics (i.e., the mean and variance) and the "statistics of statistics" of data points (Line 164-165); computing these statistics (e.g., the mean) requires a group of data points. Therefore, we need to partition the continuous label size into $\mathcal{B}$ bins. For example, in the equations from Line 176-177, e.g., $\mu_b^{\mu} = \frac{1}{N_b} \sum\nolimits^{N_b}_{i=1} z_i^{\mu}$, we need to compute the statistics of bin $b$, which contains $N_b$ data points in the bin.

It is also worth noting in the extreme case where (i) each data point has a different label $y$ and (ii) we use a very small bin size, each bin will then contain exactly only one data point. 

\subsection{Equal-Interval Bins versus Equal-Size Bins}
Note that since our smoothing kernel function is based on labels (i.e., $k(y, y')$), it is more reasonable to use \textbf{equal-interval} bins rather than \textbf{equal-size} bins.

\begin{compactitem}
\item For example, if we use the equal-interval bins $[0,1),[1,2),...$, VIR will naturally compute $k(y, y')$ for $y=1,2,3,4,5,...$ and $y'=1,2,3,4,5,...$. 
\item In contrast, if we use equal-size bins, VIR may end up with \textbf{large intervals} and may lead to inaccurate kernel values for $k(y, y')$. To see this, consider a case where equal-size bins are $[0,1),[1,2),[2,3.1),[3.1,8.9),...$; the kernel value $k(y, y')$ between bins $[2,3.1)$ and $[3.1,8.9)$ is $k(2,3.1)$, which is very inaccurate since $3.1$ is very far away from the mean of the bin $[3.1,8.9)$ (i.e., $6$). Using small and equal-interval bins can naturally address such issues.
\end{compactitem}

\subsection{The Number of Bins}
{Our preliminary results indicate that the performance of our VIR remains consistent regardless of the number of bins, as shown in the Sec.~\ref{sec:ablation-bins} of the Supplement. Thus in our paper, we chose to use the same number of bins as the imbalanced regression literature~\citep{RankSim, DIR} for fair comparison with prior work. For example, in the AgeDB dataset where the regression labels are people's "age" in the range of 0~99, we use 100 bins, with each year as one bin.} 

\subsection{Reweighting Methods and Stronger Augmentations}
{Our method focus on reweighting methods, and using augmentation (e.g., the SimCLR pipeline~\citep{chen2020simple}) is an orthogonal direction to our work. However, we expect that data augmentation could further improve our VIR's performance. This is because one could perform data augmentation only on minority data to improve accuracy in the minority group, but this is sub-optimal; the reason is that one could potentially further perform data augmentation on majority data to improve accuracy in the majority group without sacrificing too much accuracy in the minority group. However, performing data augmentation on both minority and majority groups does not transform an imbalanced dataset to an balanced dataset. This is why our VIR is still necessary; VIR could be used on top of any data augmentation techniques to address the imbalance issue and further improve accuracy.}

\subsection{Discussion on I.I.D. and N.I.D. Assumptions}\label{sec:iid_nid}
{\textbf{Generalization Error, Bias, and Variance.} 
We could analyze the generalization error of our VIR by bounding the generalization with the sum of three terms: (a) the bias of our estimator, (2) the variance of our estimator, (3) model complexity. Essentially VIR uses the N.I.D. assumption increases our estimator's bias, but significantly reduces its variance in the imbalanced setting. Since the model complexity is kept the same (using the same backbone neural network) as the baselines, N.I.D. will lead to a lower generalization error.} 

\textbf{Variance of Estimators in Imbalanced Settings.} 
{In the imbalanced setting, one typically use inverse weighting (i.e., the IPS estimator in~\defref{def:ips_supp}) to produced an unbiased estimator (i.e., making the first term of the aforementioned bound zero). However, for data with extremely low density, its inverse would be extremely large, therefore leading to a very large variance for the estimator. Our VIR replaces I.I.D. with N.I.D. to ``smooth out'' such singularity, and therefore significantly lowers the variance of the estimator (i.e., making the second term of the aforementioned bound smaller), and ultimately lowers the generalization error.}

\subsection{Why We Need Statistics of Statistics for Smoothing}
{Compared with DIR~\citep{DIR}, which only considers the \textbf{statistics} for \emph{deterministic representations}, our VIR considers the \textbf{statistics of statistics} for \emph{probabilistic representations}, this is because the requirement to perform feature smoothing to get the representation $z_i$ necessitates the computation of mean and variance of $z_i$'s neighboring data (i.e., data with neighboring labels). Here $z_i$ contains the \textbf{statistics} of neighboring data. In contrast, our VIR also needs to generate uncertainty estimation, which requires a stochastic representation for $z_i$, e.g., the mean and variance of $z_i$ (note that $z_i$ itself is already a form of statistics). This motivates the hierarchical structure of the \textbf{statistics of statistics}. Here the variance measures the uncertainty of the representation.} 

\subsection{The Choice of Kernel Function}
{The DIR paper shows that a simple Gaussian kernel with inverse square-root weighting (i.e., SQINV) achieves the best performance. Therefore, we use exactly the same parameter configuration as the DIR paper~\citep{DIR}. Specifically, we set $\sigma=2$; for label $y_b$ in bin $b$, we define neighboring labels as labels $y_{b'}$ such that $|y_{b'}-y_b|\leq 2$, i.e., $B$ contains $5$ bins. For example, if $y_b=23$, its neighboring labels are $21$, $22$, $23$, $24$, and $25$.} 

{Besides, our preliminary results also show that the performance is not very sensitive to the choice of kernels, as long as the kernel $k(a,b)$ reflects the distance between $a$ and $b$, i.e., larger distance between $a$ and $b$ leads to smaller $k(a,b)$.}

\subsection{Why VIR Solves the Imbalanced Regression Problem}
{Our training objective function (Eqn.5 in the main paper) is the \textbf{negative log likelihood} for the Normal Inverse Gaussian (NIG) distribution, and each posterior parameter ($\nu_i^*, \gamma_i^*,  \alpha_i^*$) of the NIG distribution is reweighted by importance weights, thereby assigning higher weights to minority data during training and allowing minority data points to benefit more from their neighboring information.}

{Take $\nu_i^*$ as an example. Assume a minority data point $(x_i,y_i)$ that belongs to bin $b$, i.e., its label $y_i=y_b$. Note that there is \textbf{a loss term} $(y_i-\gamma_i^*)^2\nu_i^*$ in \textbf{Eqn.5}, where $\gamma_i^*$ is the model prediction, $y_i$ is the label, and $\nu^{*}_{i}$ is the \emph{importance weight} for this data point.}

{Here $\nu_i^* = \nu_0 + (\sum_{b' \in \mathcal{B}} k (y_b, y_{b'}) p(y_{b'}))^{-1/2} \cdot n_i$ where $n_i$ represents the pseudo-count for the NIG distribution. Since $(x_i,y_i)$ is a minority data point, data from its neighboring bins has smaller frequency $p(y_{b'})$ and therefore smaller $\sum_{b' \in \mathcal{B}} k (y_{b}, y_{b'}) p(y_{b'})$, leading to \textbf{a larger \emph{importance weight}} $\nu_i^*$ \textbf{for this minority data point in Eqn.5}. }

{This allows VIR to naturally put more focus on the minority data, thereby alleviating the imbalance problem.} 

\subsection{Difference from DIR, VAE and DER}
{From a technical perspective, VIR is substantially different from any combinations of DIR~\citep{DIR}, VAE~\citep{VAE}, and DER~\citep{DER}. Specifically,}
\begin{compactitem}
\item {VIR is a deep generative model to define how imbalanced data are generated, which is learned by a principled variational inference algorithm. In contrast, DIR is a simply discriminative model (without any principled generative model formulation) that directly predict the labels from input. It is more prone to overfitting.}
\item {DIR uses deterministic representations, with one vector as the final representation for each data point. In contrast, our VIR uses probabilistic representations, with one vector as the mean of the representation and another vector as the variance of the representation. Such dual representation is more robust to noise and therefore leads to better prediction performance.}
\item {DIR is a deterministic model, while our VIR is a Bayesian model. Essentially VIR is equivalent to sampling infinitely many predictions for each input data point and averaging these predictions. Therefore intuitively it makes sense that VIR could lead to better prediction performance.}
\item {Different from VAE and DIR, VIR introduces a reweighting mechanism naturally through the pseudo-count formulation in the NIG distribution (discussed in the paragraphs \textbf{Intuition of Pseudo-Counts for VIR} and \textbf{From Pseudo-Counts to Balanced Predictive Distribution} in the paper). Note that such a reweighting mechanism is more natural and powerful than DIR since it is rooted in the probabilistic formulation.}
\item {Unlike for the standard VAE, the optimal prior (that maximizes ELBO) is known to be the aggregated posterior, our optimal prior is a \emph{neighbor-weighted} version of aggregated posterior: for standard VAE, different data points contribute \textbf{independently} to the aggregated posterior; in contrast, for our VIR, the importance of each data point with respect to the aggregated posterior is \textbf{affected} by data points with neighboring labels.}
\item {It is also worth noting that DIR cannot produce uncertainty estimation since it is a deterministic model. In contrast, Our VIR formulates a probabilistic deep generative model for imbalanced data, and therefore can naturally produce both more accurate predictions compared to DIR and better uncertainty estimation compared to DER.}
\end{compactitem}

\subsection{Bayesian Neural Networks (BNNs)}
We do not consider other BNNs in this work because:
\begin{compactitem}
\item Weights in Bayesian Neural Networks (BNNs) are extremely high-dimensional; therefore BNNs have several limitations, including the intractability of directly inferring the posterior distribution of the weights given data, the requirement and computational expense of sampling during inference, and the question of how to choose a weight prior~\citep{DER}. In contrast, evidential regression does not have these challenges.
\item In our preliminary experiments, we found that typical BNN methods suffer from computational inefficiency and would require at least two to three times more computational time and memory usage. In contrast, evidential regression does not involve such computation and memory overhead; its overhead only involves the last (few) layers, and is therefore minimal.
\item {Additionally, as demonstrated in~\citep{DeepEnsemble}, Deep Ensemble typically performs as well as or even better than BNNs. Our method outperforms Deep Ensemble, therefore suggesting its superiority over typical BNN methods.}
\end{compactitem}

\section{Additional Experiment Results}

\subsection{Ablation Study on VIR} 
In this section, we include ablation studies to verify that our VIR can outperform its counterparts in DIR (i.e., smoothing on the latent space) and DER (i.e., NIG distribution layers).

\begin{wraptable}{R}{0.5\textwidth}
\vskip -0.6cm
\setlength{\tabcolsep}{2.0pt}
\caption{Ablation study for Accuracy.}
\vspace{-1pt}
\label{table:agedb-acc-ablation}
\small
\begin{center}
\resizebox{0.49\textwidth}{!}{
\begin{tabular}{l|cccc|cccc}
\toprule[1.5pt]
Metrics      & \multicolumn{4}{c|}{MSE~$\downarrow$}     & \multicolumn{4}{c}{MAE~$\downarrow$} \\ \midrule
Shot & All & Many & Med. & Few & All & Many & Med. & Few \\ \midrule\midrule
\textsc{FDS}~\citep{DIR} & 109.78 & 93.99 & 124.96 & 216.97 & 8.12 & 7.52 & \textbf{8.68} & 12.25\\[1.5pt]
\textbf{\textsc{Encoder-Only VIR (ours)}} & \textbf{95.99} & \textbf{81.89} & \textbf{121.78} & \textbf{157.92} & \textbf{7.57} & \textbf{6.97} & 8.72 & \textbf{10.03} \\[1.5pt] \midrule\midrule
\textsc{DER}~\citep{DER} & 106.81 & 91.32 & 122.45 & 209.76 & 8.11 & 7.36 & 9.03 & 12.69\\[1.5pt]
\textbf{\textsc{Predictor-Only VIR (ours)}} & \textbf{88.96} & \textbf{74.79} & \textbf{95.85} & \textbf{203.76} & \textbf{7.28} & \textbf{6.68} & \textbf{7.76} & \textbf{11.63}\\[1.5pt] \midrule\midrule
\textsc{LDS+FDS} & 99.46 & 84.10 & 112.20 & 209.27 & 7.55 & 7.01 & 8.24 & 10.79 \\[1.5pt]
\textsc{LDS + \textbf{Predictor-Only VIR (ours)}} & 87.48 & 73.72 & 107.64 & 161.69 & 7.17 & 6.63 & 8.06 & 9.80 \\[1.5pt]
\textsc{LDS + \textbf{Encoder-Only VIR (ours)}} & 96.46 & 86.72 & 102.56 & 171.52 & 7.51 & 7.08 & 7.93 & 10.45 \\[1.5pt]
\textbf{\textsc{VIR (ours)}} & \textbf{81.76} & \textbf{70.61} & \textbf{91.47} & \textbf{142.36} & \textbf{6.99} & \textbf{6.39} & \textbf{7.47} & \textbf{9.51} \\[1.5pt]
\bottomrule[1.5pt]
\end{tabular}}
\end{center}
\vspace{-0.5cm}
\end{wraptable} % agedb-ablation study accuracy

\textbf{Ablation Study on {$q(\z_i|\{\x_i\}_{i=1}^N)$}.} 
To verify the effectiveness of VIR's encoder $q(\z_i|\{\x_i\}_{i=1}^N)$, we replace VIR's predictor $p(y_i|\z_i)$ with a linear layer (as in DIR). \tabref{table:agedb-acc-ablation} shows that compared to its counterpart, FDS~\citep{DIR}, our encoder-only VIR leads to a considerable improvements even without generating the NIG distribution. Both verify the effectiveness of our VIR's $q(\z_i|\{\x_i\}_{i=1}^N)$. 

\begin{wraptable}{R}{0.5\textwidth}
\vskip -0.6cm
\setlength{\tabcolsep}{1.5pt}
\caption{Ablation study for Uncertainty.}
\vspace{-1pt}
\label{table:agedb-var-ablation}
\small
\begin{center}
\resizebox{0.49\textwidth}{!}{
\begin{tabular}{l|cccc|cccc}
\toprule[1.5pt]
Metrics      & \multicolumn{4}{c|}{NLL~$\downarrow$}     & \multicolumn{4}{c}{AUSE~$\downarrow$} \\ \midrule
Shot         & All  & Many & Med. & Few   & All  & Many & Med. & Few \\ \midrule\midrule
\textsc{DER}~\cite{DER}  & 3.936 & 3.768 & 3.865 & 4.421 & 0.590 & 0.449 & 0.468 & 0.500 \\[1.5pt]
\textbf{\textsc{Predictor-Only VIR (ours)}} & \textbf{3.887} & \textbf{3.755} & \textbf{3.854} & \textbf{4.394} & \textbf{0.443} & \textbf{0.387} & \textbf{0.390} & \textbf{0.407} \\[1.5pt] \midrule\midrule
\textsc{LDS + \textbf{Predictor-Only VIR (ours)}} & 3.722 & 3.604 & 3.821 & 4.209 & 0.441 & 0.457 & 0.334 & 0.426 \\[1.5pt]
\textbf{\textsc{VIR (ours)}} & \textbf{3.703} & \textbf{3.598} & \textbf{3.805} & \textbf{4.196} & \textbf{0.434} & \textbf{0.456} & \textbf{0.324} & \textbf{0.414} \\[1.5pt]
\bottomrule[1.5pt]
\end{tabular}}
\end{center}
\vspace{-0.5cm}
\end{wraptable} % agedb-ablation study variance

\textbf{Ablation Study on {$p(y_i|\z_i)$}.} 
To verify the effectiveness of VIR's predictor $p(y_i|\z_i)$, we replace VIR's encoder $q(\z_i|\{\x_i\}_{i=1}^N)$ with a simple deterministic encoder as in DER~\citep{DER}. \tabref{table:agedb-acc-ablation} and \tabref{table:agedb-var-ablation} show that compared to DER, the counterpart of VIR's predictor, our VIR's predictor still outperforms than DER, demonstrating its effectiveness. Both verifies our claim that directly reweighting DER breaks NIG and leads to poor performance. 

\subsection{Ablation Study on \texorpdfstring{$\lambda$}{lambda}.}
{In this section, we include ablation studies on the $\lambda$ in our objective function. For $\lambda \in \{10.0, 1.0, 0.1, 0.01, 0.001 \}$, we run our VIR model on the AgeDB dataset. Table~\ref{table:lambda} shows the results. We can observe that our model achieves the best performance when $\lambda=0.1$.}

\begin{wraptable}{R}{0.5\textwidth}
\vskip -1.2cm
\setlength{\tabcolsep}{1.5pt}
\caption{Calibration Uncertainty on AgeDB-DIR.}
\vspace{-1pt}
\label{table:agedb-calibration}
\small
\begin{center}
\resizebox{0.49\textwidth}{!}{
\begin{tabular}{l|cccc|cccc}
\toprule[1.5pt]
Metrics      & \multicolumn{4}{c|}{NLL~$\downarrow$}     & \multicolumn{4}{c}{AUSE~$\downarrow$} \\ \midrule
Shot         & All  & Many & Medium & Few   & All  & Many & Medium & Few \\ \midrule\midrule
\textsc{Deep Ens.}~\citep{DeepEnsemble}  & 5.311 & 4.031 & 6.726 & 8.523 & 0.541 & 0.626 & 0.466 & 0.483 \\[1.5pt]
\textsc{[Calibrated] Deep Ens.}~\citep{DeepEnsemble}  & 5.015 & 3.978 & 6.402 & 8.393 & 0.506 & 0.591 & 0.386 & 0.402 \\[1.5pt] \midrule\midrule
\textsc{Infer Noise}~\citep{TFuncertainty} & 4.616 & 4.413 & 4.866 & 5.842 & 0.465 & 0.458 & 0.457 & 0.496 \\[1.5pt]
\textsc{[Calibrated] Infer Noise}~\citep{TFuncertainty} & 4.470 & 4.183 & 4.756 & 5.622 & 0.404 & 0.426 & 0.410 & 0.415 \\[1.5pt] \midrule\midrule
\textsc{DER}~\citep{DER}  & 3.918 & 3.741 & 3.919 & 4.432 & 0.523 & 0.464 & 0.449 & 0.486 \\[1.5pt]
\textsc{[Calibrated] DER}~\citep{DER}  & 3.827 & 3.674 & 3.835 & 4.297 & 0.479 & 0.401 & 0.399 & 0.396 \\[1.5pt] \midrule\midrule
\textsc{LDS} + \textsc{FDS} + \textsc{DER}~\citep{DER}  & 3.787 & 3.689 & 3.912 & 4.234 & 0.451 & 0.460 & 0.399 & 0.565 \\[1.5pt]
\textsc{[Calibrated] LDS} + \textsc{FDS} + \textsc{DER}~\citep{DER}  & 3.708 & 3.636 & 3.807 & 4.032 & 0.417 & 0.364 & 0.269 & 0.452 \\[1.5pt] \midrule\midrule
\textsc{VIR (Ours)} & {3.703} & {3.598} & {3.805} & {4.196} & {0.434} & {0.456} & {0.324} & {0.414} \\[1.5pt] 
\textbf{\textsc{[Calibrated] VIR (Ours)}}  & \textbf{3.577} & \textbf{3.493} & \textbf{3.595} & \textbf{3.866} & \textbf{0.359} & \textbf{0.232} & \textbf{0.276} & \textbf{0.266} \\[1.5pt]
\bottomrule[1.5pt]
\end{tabular}}
\end{center}
\vspace{-0.5cm}
\end{wraptable} % calibration study

\subsection{Ablation Study on Number of Bins} \label{sec:ablation-bins}

In this section, we include ablation studies on the number of bins in our settings. For the cases with $100/1=100$, $100/3\approx 33$, and $100/5=20$ bins, we run our VIR model on the AgeDB dataset. Table~\ref{table:bin-size} shows the results. We can observe that our VIR remains consistent regardless of the number of bins.

\subsection{Calibrated Uncertainty}

{In a bid to enhance the evaluation of our model's uncertainty, we used our validation set to apply calibration techniques (specifically, variants of temperature scaling~\cite{TS}) on different methods. We focused on each of the output distribution parameters $\nu, \alpha, \beta$ as discussed in our main paper, introducing individual scalar weights for each parameter to calibrate uncertainty estimation. Upon deriving each weight $\w_{\nu}, \w_{\alpha}, \w_{\beta}$ from the validation set, we incorporated them into the test dataset to ascertain the final performance. The data outlined in Table~\ref{table:agedb-calibration} indicates that following re-calibration, the uncertainty was further optimized. Notwithstanding such uncertainty calibration, our model persists in demonstrating superior performance compared to other benchmark methods.}

\subsection{Error bars on Accuracy}
{In order to further underscore the superiority of our methodology, we also included error bars for our proposed method (i.e., VIR) which are generated from five independent runs. Note that we did not report error bars for those second/third best baselines (e.g., RankSim, LDS+FDS) since we directly use the reported performance from their papers, and in NYUD2-DIR, all the error bars on VIR are approximately 0.001. Due to the \emph{width} constraints of the paper, these are not included in Table~\ref{table:nyud2-accuracy-total}. Results in Table~\ref{table:agedb-accuracy-total}, Table~\ref{table:imdb-wiki-accuracy-total}, and Table~\ref{table:sts-accuracy-total} demonstrate the effectiveness of our approach.}

%%%%%%%%%%%%%%%%%%%%%%%%%%%%%%%%%%%%%%%%%%%%%%%%%%%%%%%%%%%%

\end{document}